\theoremstyle{definition}
\newtheorem{theorem}{Theorem}
\newtheorem{lemma}[theorem]{Lemma}
\setlist[enumerate]{leftmargin=0.5cm,topsep=0pt,itemsep=-2pt}
\setlist[itemize]{leftmargin=0.5cm,topsep=0pt,itemsep=-2pt}
\newcommand{\codecontests}{CodeContests\xspace}
\newcommand{\passatk}{pass@$k$\xspace}
\icmltitlerunning{Optimizing Language Models for Inference Time Objectives using Reinforcement Learning}
\begin{document}

\twocolumn[
\icmltitle{Optimizing Language Models for Inference Time Objectives using Reinforcement Learning}

\icmlcorrespondingauthor{Yunhao Tang}{yt2541@columbia.edu}
\icmlcorrespondingauthor{Kunhao Zheng}{kunhao@meta.com}

\icmlsetsymbol{equal}{*}

\begin{icmlauthorlist}
\icmlauthor{Yunhao Tang}{meta1,equal}
\icmlauthor{Kunhao Zheng}{meta2,equal}
\icmlauthor{Gabriel Synnaeve}{meta2}
\icmlauthor{R\'emi Munos}{meta2,equal}
\end{icmlauthorlist}
\icmlaffiliation{meta1}{Meta GenAI}
\icmlaffiliation{meta2}{Meta FAIR}

\icmlkeywords{Machine Learning, ICML}

\vskip 0.3in
]

\printAffiliationsAndNotice{\icmlEqualContribution} 

\begin{abstract}
In this work, we investigate the merits of explicitly optimizing for inference time algorithmic performance during model training. We show how optimizing for inference time performance can improve overall model efficacy. We consider generic inference time objectives with $k$ samples, with a focus on pass@$k$ and majority voting as two main applications. With language model training on reasoning datasets, we showcase the performance trade-off enabled by training with such objectives. When training on code generation tasks, we show that the approach significantly improves pass@$k$ objectives compared to the baseline method.
\end{abstract}

\section{Introduction}

Traditionally, the performance of machine learning system can be improved via either training time or inference time algorithm.  At training time, knowledge is distilled into model weights via gradient descent \citep{bottou2010large}. At inference time, a fixed model is queried multiple times to deliver a better prediction than a single model call.

Though model training and inference time compute improves model performance, their scaling properties differ significantly. For example, \citet{lerer2020improving} has demonstrated that for the game of Hex, scaling inference time compute can match performance at increased training budget, while at a fraction of the cost. More generally, inference compute has proved effective at various machine learning applications, even if the underlying model has been trained extensively. Notable examples include board game \citep{silver2014,silver2018general}, competitive pokers \citep{brown2018superhuman,brown2019superhuman}, general game \citep{lerer2020improving,meta2022human}, competitive programming \citep{li2022competition} and language modeling \citep{jaech2024openai,guo2025deepseek}. 

Usually, model training does not explicitly account for the downstream inference time algorithm. It is clear that there is a trade-off: this helps avoid premature specialization, but might also risk not benefiting from the inference time algorithm fully had we known the algorithm in advance.

In this work, we investigate the impact of training explicitly for the inference time algorithm. We focus on pass@$k$ and majority voting, two influential yet relatively simple inference time objectives. The pass@$k$ objective arises when the task has verifiers and the system can retry it $k$ times. Majority voting is even more broadly applicable: devoting more inference time compute to have different solutions and selecting the most voted one. We demonstrate how such objectives can be optimized via stochastic gradient descent, and built as part of an online reinforcement learning  algorithm (Section~\ref{sec:algo}). We carry out extensive ablations that showcase the trade-off of different objectives, such as an improved inference time performance when the training algorithm is aware of the inference time algorithm (Section~\ref{sec:exp}): we show that when training on  mathematical reasoning datasets such as MATH, as well as challenging code generation datasets such as \codecontests, new algorithmic variants achieve significant gains on inference time objectives of interest.

\section{Reinforcement learning for language model}

A language model can be understood as a policy $\pi_\theta$ in the context of reinforcement learning (RL) \citep{sutton1998}. Given a prompt $x$, the policy generates a response $y$, which then gets assessed by a human user. Usually, the objective is to optimize $\pi_\theta$ such that certain reward function $r(x,y)$ that captures human preference (HF) is maximized \citep{christiano2017deep,ziegler2019fine,ouyang2022training,bai2022constitutional}. Formally, consider the average reward optimization problem
\begin{align*}
    \max_\theta \mathbb{E}_{x\sim\rho,y\sim \pi_\theta(\cdot|x)}\left[r(x,y)\right] ,
\end{align*}
where $\rho$ is a distribution over prompt set.  Below, when the context is clear we will omit the dependency on the prompt $x$. Given samples $y$ drawn from $\pi_\theta$, we can construct stochastic gradient estimates to the policy gradient \citep{sutton1999} to iteratively improve the policy. Note that for simplicity, we have omitted the regularization prevalent in the RLHF formulation \citep{ouyang2022training}. 

\paragraph{Inference time objectives.} At inference time when the model is deployed, one might adopt a procedure different from training time to obtain better performance. Depending on the application, different inference time objectives are desired. One important class of inference time objective is \emph{pass@$k$}, where the model is given a budget of $k$ generations at test time. Though quite a lenient metric, pass@$k$ is especially useful for difficult problems (e.g., \citep{li2022competition}) that allow for trying multiple times. It also approximates best-of-$k$ assuming access to a good reward model. Formally,
\begin{align*}
    \mathbb{E}_{(y_i)_{i=1}^k \sim \pi_\theta(\cdot|x)}\left[\max\left(r_1, \dots, r_k\right))\right],
\end{align*}
where $r_i=r(x,y_i)$ denotes the reward for generation $y_i$. 

Another important example of inference time technique is \emph{majority voting}. The model is still given a budget of $k$ generations and outputs a single solution for assessment. In general, majority voting is applicable if the generation $y=(c,a)$ can be decomposed into a chain-of-thought $c$ and a final answer $a$. The reward scores only the final answer $r(x,y)=r(x,a)$, i.e., for problems with short and verifiable answers. It has proved highly effective at leveraging inference time compute for improved performance in various domains \citep{wang2022self,lightman2023let}. Concretely, the majority voting objective is
\begin{align*}
    \mathbb{E}_{(c_i,a_i)_{i=1}^k \sim \pi_\theta(\cdot|x)}\left[r\left(\text{maj}(a_1, \dots, a_k)\right))\right],
\end{align*}
where the operation $\text{maj}(\cdot)$ extracts the majority element from a set of items. We assume random tie breaks in case more than one element takes the majority.

We will focus on these two objectives as they are self-contained given a language model. There are alternative inference time methods such as best-of-$k$ with access to auxiliary models (e.g., reward models) \citep{uesato2022solving,lightman2023let} which we do not discuss. 

\section{Optimizing Language Models for Inference Time Objectives using RL} \label{sec:algo} 

These inference time objectives share a key feature: they all use $k$ samples.
We propose a general formulation of $k$-sample objectives that can be optimized by stochastic gradient descent.

In general, we consider the $k$-sample objective
\begin{align}
    \mathbb{E}_{(y_i)_{i=1}^k \sim \pi_\theta(\cdot|x)}\left[f\left(x,y_1...y_k\right))\right],\label{eq:obj}
\end{align}
defined through a function $f:\mathcal{X}\times\mathcal{Y}\times\mathcal{Y}...\rightarrow\mathbb{R}$. Here $f$ is an aggregation function that can process an arbitrary number of generations. It is clear that both pass@$k$ and majority voting objectives are special cases.

\subsection{Unbiased stochastic gradient estimate}

To optimize for the objective in an unbiased way with stochastic gradient descent, we can sample $k$ generations i.i.d. $(y_i)_{i=1}^k\sim \pi_\theta(\cdot|x)$ and construct the gradient estimate akin to REINFORCE \citep{williams1992}
\begin{align*}
   f\left(x,y_1...y_k\right) \sum_{i=1}^k \nabla_\theta \log \pi_\theta(y_i|x).
\end{align*}
Importantly, here we \emph{sum} over the weighted gradient of log probabilities across $k$ samples. This is in contrast to a $k$-sample policy gradient estimate that \emph{averages} over $k$ samples. The key difference is that these $k$ samples are coupled through the aggregation function $f$. As a result, such a gradient estimate has high variance on the order of $\mathcal{O}(k)$ 
\citep{fallah2020convergence,tang2022biased} that increases with the number of samples $k$, as opposed to $\mathcal{O}(k^{-1})$ in $k$-sample average policy gradient estimate.

Henceforth for notational simplicity, we let $f(\mathbf{y})$ be a short hand notation for $f(x,y_1...y_k)$ and let $f(\mathbf{y}_{-i})$ be a short hand notation for leaving out generation $i$: $f(\mathbf{y}_{-i})\coloneqq f(x,y_1...y_{i-1},y_{i+1}...y_k)$. We also drop the dependency on $x$ when the context is clear.

For variance reduction, we propose the leave-one-out control variate that results in the following gradient estimate
\begin{align}
    \sum_{i=1}^k \left( f\left(\mathbf{y}\right) -  f\left(\mathbf{y}_{-i}\right) \right)\nabla_\theta \log \pi_\theta(y_i|x).\label{eq:loo}
\end{align}
The effective \emph{advantage} function $A_i \coloneqq f\left(\mathbf{y}\right) -  f\left(\mathbf{y}_{-i}\right)$ measures the contribution that $y_i$ has on improving from $f\left(\mathbf{y}_{-i}\right)$ to $f\left(\mathbf{y}\right)$. 

\begin{lemma} (\textbf{Unbiased leave-one-out gradient estimate})
    The gradient estimate with the leave-one-out control variate in Eqn~\eqref{eq:loo} is unbiased.
\end{lemma}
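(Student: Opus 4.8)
The plan is to show that the control-variate term I subtract off contributes zero to the expectation, so that the estimator in Eqn~\eqref{eq:loo} has the same mean as the plain REINFORCE estimator, which is already known to be unbiased. Concretely, I would write the proposed estimate as
\begin{align*}
\sum_{i=1}^k f(\mathbf{y})\nabla_\theta\log\pi_\theta(y_i|x) - \sum_{i=1}^k f(\mathbf{y}_{-i})\nabla_\theta\log\pi_\theta(y_i|x),
\end{align*}
and argue that the first sum is exactly the REINFORCE estimator whose expectation equals the true gradient $\nabla_\theta \mathbb{E}[f(\mathbf{y})]$, while the second sum has expectation zero term by term.

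For the first sum, I would invoke the standard log-derivative (score function) identity: since the $k$ samples are drawn i.i.d.\ from $\pi_\theta(\cdot|x)$, the joint density is $\prod_{j=1}^k \pi_\theta(y_j|x)$, and $\nabla_\theta \log\prod_j \pi_\theta(y_j|x) = \sum_i \nabla_\theta\log\pi_\theta(y_i|x)$. Hence $\mathbb{E}[\sum_i f(\mathbf{y})\nabla_\theta\log\pi_\theta(y_i|x)] = \mathbb{E}[f(\mathbf{y})\nabla_\theta\log p_\theta(\mathbf{y})] = \nabla_\theta\mathbb{E}[f(\mathbf{y})]$, the exact gradient of the objective in Eqn~\eqref{eq:obj}, assuming the usual regularity conditions that permit interchanging gradient and expectation.

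The crux is the second sum, and this is the step I expect to require the most care. I would treat each index $i$ separately and exploit independence: condition on all samples except $y_i$, i.e.\ on $\mathbf{y}_{-i}$. The key observation is that $f(\mathbf{y}_{-i})$ does not depend on $y_i$ by definition, so it is a constant with respect to the inner expectation over $y_i$. Therefore
\begin{align*}
\mathbb{E}_{y_i}\!\left[f(\mathbf{y}_{-i})\nabla_\theta\log\pi_\theta(y_i|x)\,\middle|\,\mathbf{y}_{-i}\right] = f(\mathbf{y}_{-i})\,\mathbb{E}_{y_i}\!\left[\nabla_\theta\log\pi_\theta(y_i|x)\right],
\end{align*}
and the inner score-function expectation vanishes, since $\mathbb{E}_{y_i\sim\pi_\theta}[\nabla_\theta\log\pi_\theta(y_i|x)] = \int \nabla_\theta\pi_\theta(y_i|x)\,dy_i = \nabla_\theta\!\int \pi_\theta(y_i|x)\,dy_i = \nabla_\theta 1 = 0$. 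Taking the outer expectation over $\mathbf{y}_{-i}$ and summing over $i$ then kills the entire control-variate contribution.

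Combining the two parts, the expectation of Eqn~\eqref{eq:loo} equals $\nabla_\theta\mathbb{E}[f(\mathbf{y})]$, establishing unbiasedness. The only subtlety worth flagging explicitly is that the argument for the control variate relies crucially on $f(\mathbf{y}_{-i})$ being a valid baseline—\emph{independent} of the specific action $y_i$ whose log-probability it multiplies—which is precisely why the leave-one-out construction is admissible, whereas naively using the full $f(\mathbf{y})$ as a baseline would not be. I would also note that the i.i.d.\ sampling assumption is what makes the factorization of the joint density and the per-index conditioning clean.
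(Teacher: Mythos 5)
Your proposal is correct and follows the same route as the paper: the paper's one-line proof likewise reduces unbiasedness to showing $\mathbb{E}\left[f(\mathbf{y}_{-i})\nabla_\theta\log\pi_\theta(y_i|x)\right]=0$ by independence of the $k$ samples, taking the unbiasedness of the plain REINFORCE term for granted from the preceding discussion. You simply spell out both halves (the score-function identity for the joint density and the conditioning argument for the baseline) in more detail than the paper does.
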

\begin{proof}
    For any $i$,
    $
   \mathbb{E}\left[ f\left(\mathbf{y}_{-i}\right)  \nabla_\theta \log \pi_\theta(y_i|x) \right] = 0$ since all $k$ samples are independent, hence the proof is concluded.
\end{proof}

Let us examine a few special cases to make concrete the concept of leave-one-out gradient estimate.

\paragraph{Average reward: $f(\mathbf{y})=\frac{1}{k}\sum_{i=1}^k r_i$.} In this case, the form of the policy gradient estimate is equivalent to the leave-one-out control variate proposed in a number of prior work that tackles $k$-sample average objectives \citep{kool2019buy,mnih2016}.
\begin{align*}
    &\sum_{i=1}^k \left( \frac{1}{k}\sum_{j=1}^k r_j-  \frac{1}{k-1}\sum_{j\neq i} r_j \right)\nabla_\theta \log \pi_\theta(y_i|x) \\
    &= \frac{1}{k}\sum_{i=1}^k \left(r_i -  \frac{1}{k-1}\sum_{j\neq i} r_j \right)\nabla_\theta \log \pi_\theta(y_i|x).
\end{align*}

\paragraph{Pass@$k$: $f(\mathbf{y})=\max(r_1...r_k)$.} With pass@$k$, the advantage function for generation $i$ reduces to $A_i=\max(r_{1:k})-\max(r_{-i})$. In contrast to the previous example, the advantage $A_i$ here is non-negative. In fact, if we assume the reward is ordered as $r_{(1)}\leq r_{(2)}\leq ...\leq r_{(k)}$ we can rewrite the gradient estimate equivalently as
\begin{align*}
    \left(r_{(k)} - r_{(k-1)}\right) \nabla_\theta \log \pi_\theta(y_{(k)}|x),
\end{align*}
i.e., the advantage is non-zero only for the best generation $y_{(k)}$. The advantage is effectively the difference between the best and the second best reward. Such a rewrite is sensible because, if the second best and the best reward are the same, there is no incentive in updating the policy.

To impart more intuition on the sparsity of the learning signal, consider a binary reward problem for pass@$k$: we expect the learning signal to be more dense when the average solve rate is at $O(k^{-1})$. If the solve rate is higher, the problem is too easy; otherwise the problem is too hard. In both cases, the learning signal decreases for the gradient estimate.

\paragraph{Majority voting: $f(\mathbf{y})=r\left(\text{maj}(\mathbf{a})\right)$.} Here, we will adopt the notation for $\mathbf{a}\coloneqq a_{1:k}$ and $\mathbf{a}_{-i}$ for the leave-one-out variant. With majority voting, the advantage function  for generation $i$ is $A_i = r\left(\text{maj}(\mathbf{a})\right) - r\left(\text{maj}(\mathbf{a}_{-i})\right)$, measuring the impact of answer $a_i$ on the majority voted answer $a$.  Assume that all $m<k$ unique answers are sorted in their count $|a_{(i)}|$ as $a_{(1)}\leq ... a_{(m)}$, then $(\text{maj}(\mathbf{a}_{-i})\neq (\text{maj}(\mathbf{a})$ if and only if $a_i=a_{(m)}$ and $|a_{(m)}|=|a_{(m-1)}|+1$, since this is the only case where the leave-one-out voted answer changes. In other words, the advantage is all-zero in case a particular answer $|a_{(m)}|$ dominates the count such that $|a_{(m)}|>|a_{(m-1)}|+1$ since there is no incentive in updating the policy.

\subsection{Trading-off further variance reduction with bias}

In general, with the leave-one-out control variate, the advantage estimate becomes more \emph{sparse}. Indeed, the effective advantage $A_i$ measures the impact of a individual sample on the global objective. For example, for the pass@$k$, $A_i$ is only non-zero for the reward maximizing generation; for the majority voting, $A_i$ is non-zero only when 
it is possible to alter the majority voted answers. 

Intuitively, these observations imply that the update has high variance. However in a sense, the leave-one-out control variate (Eqn~\eqref{eq:loo}) marks a near optimal trade-off between simplicity and variance reduction for an unbiased gradient estimate. To reduce variance further, we can introduce bias: by subtracting the mean of all advantages, this produces a general gradient estimate
\begin{align}
    \sum_{i=1}^k \underbrace{\left( f\left(\mathbf{y}\right) -  f\left(\mathbf{y}_{-i}\right)  - \bar{A}\right)}_{A_i'}\nabla_\theta \log \pi_\theta(y_i|x).\label{eq:loo-demean}
\end{align}
where the additional baseline is $\bar{A}\coloneqq\frac{1}{k}\sum_{i=1}^k A_i = \frac{1}{k}\sum_{i=1}^k f\left(\mathbf{y}\right) -  f\left(\mathbf{y}_{-i}\right)$. By construction, the new advantage $A_i'=A_i - \bar{A}$ is zero-mean and arguably the new estimate has even lower variance. However, such an estimate introduces a bias to the objective it optimizes.
\begin{lemma} (\textbf{Objective of the biased gradient estimate})
    The gradient estimate with zero-mean advantage function in Eqn~\eqref{eq:loo-demean} optimizes for an alternative objective 
    \begin{align}
        \mathbb{E}_{(y_i)_{i=1}^k\sim \pi_\theta(\cdot|x)}\left[\frac{1}{k}\sum_{i=1}^k f\left(\mathbf{y}_{-i}\right)\right].\label{eq:biased-obj}
    \end{align}
\end{lemma}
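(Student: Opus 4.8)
The plan is to show that the expectation of the biased estimator in Eqn~\eqref{eq:loo-demean} equals the gradient of the objective in Eqn~\eqref{eq:biased-obj}. The first step is to simplify the demeaned advantage $A_i'$ by substituting the explicit form of the baseline. Writing $\bar{A} = \frac{1}{k}\sum_{j=1}^k\left(f(\mathbf{y}) - f(\mathbf{y}_{-j})\right) = f(\mathbf{y}) - \frac{1}{k}\sum_{j=1}^k f(\mathbf{y}_{-j})$, I would observe that the global term $f(\mathbf{y})$ cancels, leaving
\begin{align*}
    A_i' = \frac{1}{k}\sum_{j=1}^k f(\mathbf{y}_{-j}) - f(\mathbf{y}_{-i}).
\end{align*}
This cancellation is the crux of the argument: the aggregate evaluation $f(\mathbf{y})$ disappears entirely, so the estimator depends only on leave-one-out quantities, which is precisely what suggests the modified objective of Eqn~\eqref{eq:biased-obj}.

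Next I would split the resulting estimator into two pieces,
\begin{align*}
    \left(\frac{1}{k}\sum_{j=1}^k f(\mathbf{y}_{-j})\right)\sum_{i=1}^k \nabla_\theta \log \pi_\theta(y_i|x) - \sum_{i=1}^k f(\mathbf{y}_{-i})\nabla_\theta \log \pi_\theta(y_i|x),
\end{align*}
and handle them separately. The first piece is exactly the REINFORCE estimator applied to the aggregated quantity $g(\mathbf{y})\coloneqq \frac{1}{k}\sum_{j=1}^k f(\mathbf{y}_{-j})$; since the $k$ generations are drawn i.i.d.\ from $\pi_\theta$, the log-derivative identity $\nabla_\theta \prod_i \pi_\theta(y_i|x) = \left(\prod_i \pi_\theta(y_i|x)\right)\sum_i \nabla_\theta \log \pi_\theta(y_i|x)$ shows that its expectation equals $\nabla_\theta \mathbb{E}[g(\mathbf{y})]$, i.e.\ the gradient of Eqn~\eqref{eq:biased-obj}.

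The remaining step is to show the second piece vanishes in expectation, and here I would reuse the argument already established for the unbiased leave-one-out lemma. For each fixed $i$, the factor $f(\mathbf{y}_{-i})$ does not depend on $y_i$, so by independence the expectation factorizes as $\mathbb{E}[f(\mathbf{y}_{-i})]\cdot \mathbb{E}_{y_i}[\nabla_\theta \log \pi_\theta(y_i|x)]$, and the second factor is the expected score function, which is zero. Hence every term in the sum vanishes, and combining the two pieces gives that the expected estimator equals $\nabla_\theta \mathbb{E}[\frac{1}{k}\sum_{i=1}^k f(\mathbf{y}_{-i})]$, as claimed. I do not expect a genuine obstacle here; the only subtle point is spotting the cancellation of $f(\mathbf{y})$ in the opening step, after which the result reduces to standard policy-gradient bookkeeping.
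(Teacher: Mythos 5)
Your proof is correct and follows essentially the same route as the paper's: cancel $f(\mathbf{y})$ inside the demeaned advantage, discard the $\sum_i f(\mathbf{y}_{-i})\nabla_\theta\log\pi_\theta(y_i|x)$ term by the independence/zero-mean-score argument from the unbiasedness lemma, and recognize the remainder as the multi-sample REINFORCE estimator for $\mathbb{E}\bigl[\frac{1}{k}\sum_j f(\mathbf{y}_{-j})\bigr]$. Your write-up is in fact somewhat more explicit than the paper's ``simple algebraic manipulation,'' but the underlying decomposition is identical.
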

\begin{proof}
    The gradient bias comes from the baseline term which evaluates to $\mathbb{E}\left[\sum_{i=1}^k -\bar{A} \nabla_\theta \pi_\theta(y_i|x)\right]$. A simple algebraic manipulation shows that the aggregate gradient expectation is
    \begin{align*}
        \mathbb{E}\left[\sum_{i=1}^k \frac{1}{k}\sum_{j=1}^k f(\mathbf{y}_{-j}) \nabla_\theta \log \pi_\theta(y_i|x)\right]
    \end{align*}
    which is the unbiased gradient estimate to the objective $\mathbb{E}\left[\sum_{i=1}^k \frac{1}{k}\sum_{j=1}^k f(\mathbf{y}_{-j})\right]$.
\end{proof}

\begin{figure*}[h]
\centering
\includegraphics[width=6in]{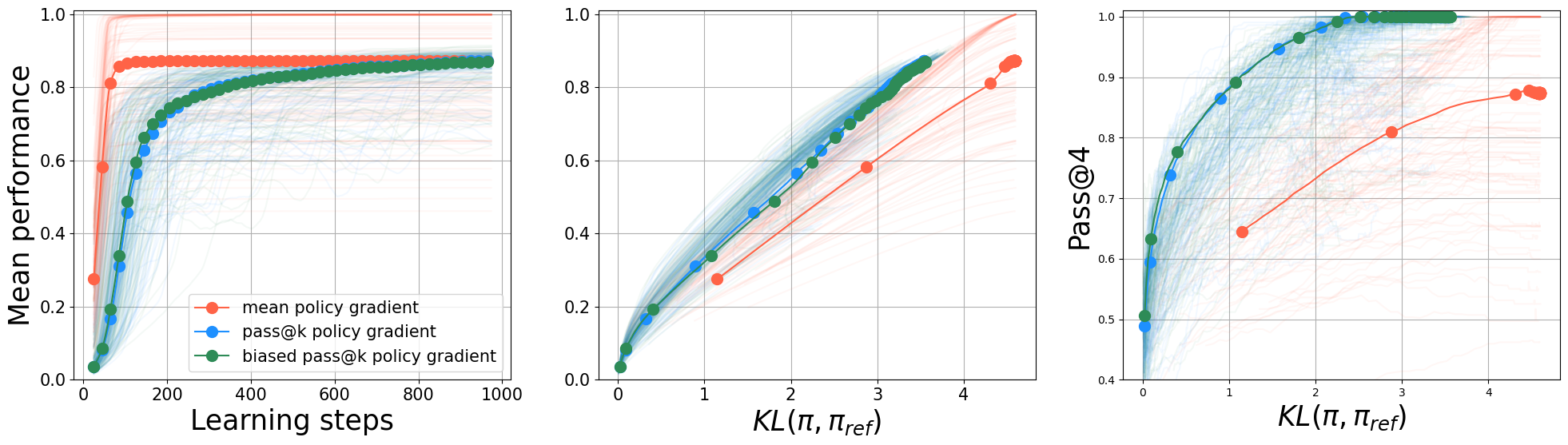}
\caption{\small{Comparison of different gradient estimates in a bandit case. We set up a bandit problem with $|\mathcal{Y}|=100$ possible actions and each reward $r(\mathbf{y})$ is a deterministic scalar sampled from unit Gaussian. We compare three algorithmic variants: the mean policy gradient, the pass@$k$ policy gradient and its biased variant. All algorithms apply $k=4$ samples per update with learning rate $\eta=1.0$. Overall, we see that the baseline gradient makes the fastest improvement on the mean performance, when graphed against the learning steps (left plot); however, it is generally less KL-efficient than other $k$-sample alternatives (middle plot). When measuring the pass@$k$ performance, the $k$-sample gradient estimates lead to significantly faster improvements (right plot).}}
\label{figure:bandit}
\end{figure*}

The objective differs from the initial $k$-sample objective in Eqn~\eqref{eq:obj} - it measures the leave-one-out $k-1$-sample objective $f\left(\mathbf{y}_{-i}\right)$ averaged over all $k$ samples. The bias is hence
\begin{align*}
    \mathbb{E}_{(y_i)_{i=1}^k\sim \pi_\theta(\cdot|x)}\left[f\left(\mathbf{y}\right) - \frac{1}{k}\sum_{i=1}^k f\left(\mathbf{y}_{-i}\right)\right].
\end{align*}
Let us characterize the bias in a few special cases. For the average reward case $f(\mathbf{y})=\frac{1}{k}\sum_{i=1}^k r_i$ there is no bias. This is compatible with the fact that the advantages $A_i$ are already zero-mean. The bias is non-trivial in general.

\paragraph{Pass@$k$.} Recall that the samples are ordered $r_{(1)} \leq r_{(2)} \leq ... r_{(k)}$. We can write explicitly the new objective for the biased gradient estimate in Eqn~\eqref{eq:biased-obj} as
\begin{align*}
   \mathbb{E}\left[ \frac{k-1}{k}r_{(k)} + \frac{1}{k}r_{(k-1)} \right]
\end{align*}
which is a convex combination of the best and the second best reward. The bias is hence $\frac{1}{k}\mathbb{E}[r_{(k)}-r_{(k-1)}]$ which is the gap between the best and the second best reward. Though such bias vanishes as $k$ increases, it also means that the overall gradient estimate has sparser signal.
It is less insightful to make explicit the new objective for the majority voting case, which we detail more in Appendix~\ref{appendix:discuss}.

\subsection{Additional discussions}

We provide further discussion on the property of the $k$-sample gradient estimate.

\paragraph{Interpolating mean and general $k$-sample objectives.} We see that by subtracting the mean of the $k$-sample advantage in Eqn~\eqref{eq:loo-demean}, we manage to recover a smoother average of leave-one-out objective (Eqn~\eqref{eq:biased-obj}). In fact, by taking this approach this approach further and constructing leave-$n$-out objectives, we can construct an interpolation between the original $k$-sample objective and the mean objective. See Appendix~\ref{appendix:discuss} for more discussions.

\paragraph{Effect of increasing number of samples $k$.} For the baseline policy gradient, varying the number of sample $k$ does not impact the expectation of the gradient estimate. The corresponding gradient estimates always approximate the gradient of the mean performance. Increasing the value of $k$ helps reduce variance of the estimate, but has a diminishing effect when the batch size is large enough. For the $k$-sample objectives, varying the value of $k$ changes the objective itself. For the pass@$k$ objective, increasing $k$ means that we care increasingly about the extreme values (i.e., max values) of the reward distribution. For the majority voting objective, as $k\rightarrow\infty$, we optimize for the margin likelihood that the final answer $a$ is matched against $a^\ast$.  We will ablate the empirical impact of $k$ in Section~\ref{sec:exp}.

\begin{figure*}[h]
\centering
\includegraphics[width=6in]{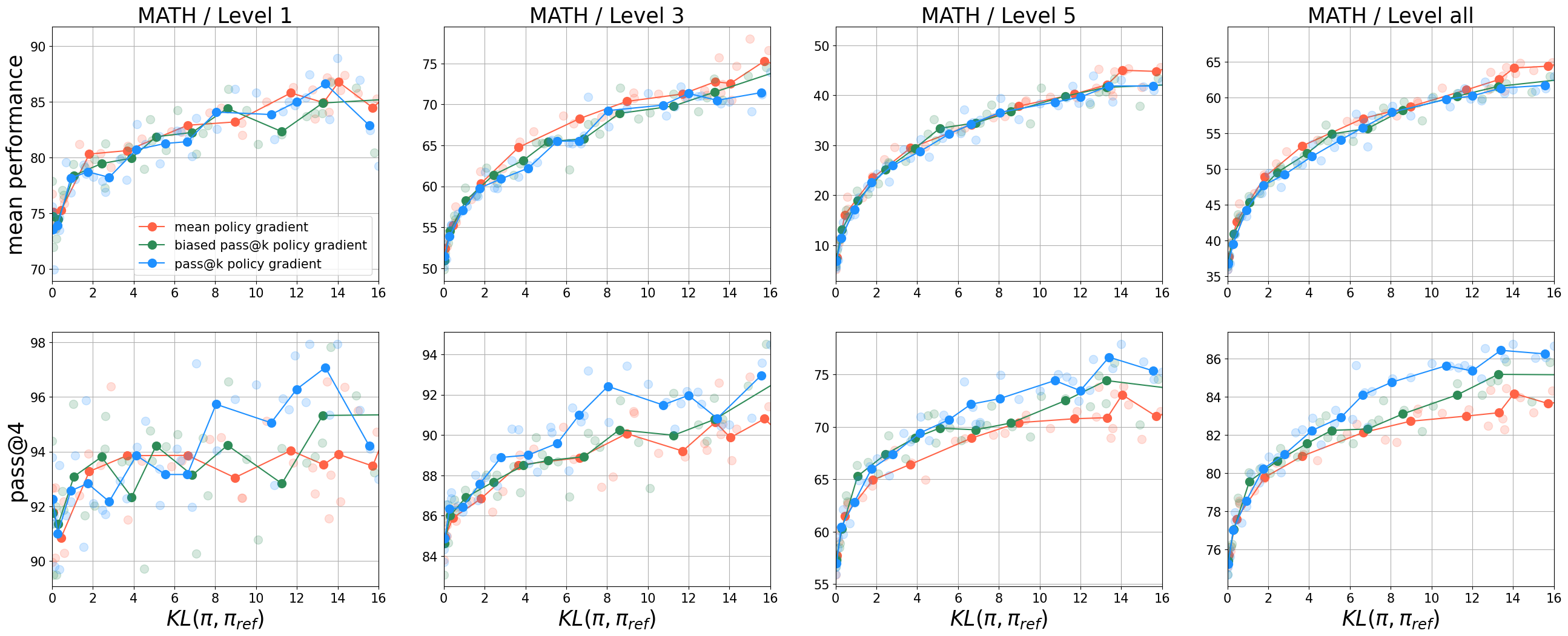}
\caption{\small{MATH training pass@$k$ 8B model. We compare three baselines: regular mean policy gradient algorithm and two variants of pass@$k$ policy gradient algorithms (unbiased and biased). We split the performance across MATH difficulty level and report the mean performance and pass@$4$ performance over time. We observe that as training progresses, pass@$k$ policy gradient algorithms seem to display a slight advantage over the baseline algorithm.}}
\label{figure:math-train-max-k-8b}
\end{figure*}

\paragraph{Empirical study on a bandit case.} To better illustrate the property of the $k$-sample gradient variants, we investigate the behavior of various policy gradient estimates in the bandit setting. There are a total of $|\mathcal{Y}|=100$ actions and each incurs a deterministic reward $r(\mathbf{y})$ sampled from a unit Gaussian. We consider softmax parameterizations $\pi_\theta(\mathbf{y})\propto\exp{\left(\theta(\mathbf{y})\right)}$ and all algorithmic variants apply $k=4$ samples for the update and identical hyper-parameters and initializations. Figure~\ref{figure:bandit} shows the evaluated performance of different variants over time. 

When measuring the mean performance, the mean policy gradient algorithm clearly maximizes the performance at a faster rate. However, when graphed against the KL divergence $\mathbb{KL}\left(\pi_\theta,\pi_\text{ref}\right)$ against the initial policy $\pi_\text{ref}$, we see that the pass@$k$ gradient estimate (Eqn~\eqref{eq:loo}) and its biased variant (Eqn~\eqref{eq:loo-demean}) is more \emph{KL-efficient} \citep{gao2023scaling}. This follows from the fact that the $k$-sample gradient estimates have more conservative updates compared to the mean gradient. In general, maximizing the mean performance can also improve pass@$k$ performance as we observed in the plot. In this case, the other two algorithmic variants are by design more efficient.

\section{Related work}\label{sec:related}

We discuss how our approach relates to prior work.

\paragraph{Other multi-sample objectives.}
Multi-sample objectives have been considered extensively in the variational inference literature \citep{raiko2014techniques,burda2015importance,mnih2016variational}. Using terminologies from this work, they consider $k$-sample objectives of the following form
\begin{align*}
    \mathbb{E}\left[\log \left(\frac{1}{k} \sum_{i=1}^k \frac{p(x,y_i)}{q(x,y_i)}\right)\right]
\end{align*}
where $p,q$ are usually two distributions. Such an objective is by design a lower bound to the marginal likelihood, which is often considered the ultimate objective \citep{burda2015importance}. In gradient-based meta-learning, the $k$-sample objective takes the following form \citep{finn2017model,fallah2020convergence,fallah2020provably,tang2022biased}
\begin{align*}
    \mathbb{E}\left[J\left(\theta + \frac{1}{k}\sum_{i=1}^k g(x,y_i)\right)\right]
\end{align*}
where $g(x,y_i)\in\mathbb{R}^d$ is meant to be a gradient update to the vector $\theta\in\mathbb{R}^d$ and $J:\mathbb{R}^d\rightarrow\mathbb{R}$ is a scalar function. In both cases, the effective aggregate function $f$ is a transformation of the average function. Intuitively, such objectives are  \emph{smoother} than pass@$k$ and majority voting. The specific structure of this objective also produces gradient estimate that trades-off bias with variance \citep{fallah2020provably,tang2021unifying}, complementary to our development here.

\paragraph{Leave-one-out control variate.}
The leave-one-out control variate has been studied in \citet{mnih2016variational} for variational inference, and in \citet{kool2019buy,ahmadian2024back}
for improving the mean objective in a REINFORCE algorithm. As an alternative, \citet{mnih2016variational} analyzed the control variate $v_i\coloneqq \mathbb{E}_{\mathbf{y}_{-i}}\left[f(\mathbf{y})|y_i\right]$ which accounts for the credit assigned to other $k-1$ samples in an expected sense. Such an objective can be especially useful for the $k$-sample objectives in our case since the advantage $\tilde{A}_i=f(\mathbf{y})-v_i$ is by construction zero-mean and arguably has lower variance. However, estimating such quantities introduces additional overhead in practice.

\paragraph{Optimizing for inference time objectives.} Concurrently, \citet{balashankar2024infalign} proposed to optimize for pairwise win rate based on best of $k$ sampling. Under certain conditions, they showed that the objective is equivalent to the regularized RL problem with a transformed reward, which can be approximately optimized. \citet{amini2024variational}
pursued a similar approach but noticeably took a log transformation of the score. Such an objective is not possible to estimate in an unbiased, instead, a variational approximation is needed. \citet{chow2024inference} proposed to optimize best of $k$ performance for applications where a scoring function (or verifier) is available during training. Both work have shown merits in accounting for inference time objectives during training.

\section{Experiments on mathematical reasoning}\label{sec:exp}

Throughout, we focus on the mathematical reasoning dataset MATH \citep{hendrycks2021measuring} where the prompt $x$ consists in asking the model a mathematical question with a short-form ground truth answer $a^*$ available. Given the model generation $y=(c,a)$ which typically consists of a step-by-step chain-of-thought reasoning $c$ and a proposed answer $a$, the reward is computed as a match between $a$ and $a^\ast$. We adopt Sympy \citep{10.7717/peerj-cs.103} to automatically match the answers and assign a reward of $r=1$ if there is a match and $r=-1$ otherwise. As such, the objective $\mathbb{E}[r]$ also measures the average accuracy of the policy. We train on models on the MATH training set with various objective alternatives introduced above. During experiments, we provide the model with a system prompt that asks for a step-by-step solution, followed by a final answer.

Our main experiments are based on the 8B model from the Llama 3 model family \citep{dubey2024llama} though we also carry out ablations on models of other sizes. We apply online policy gradient algorithmic variants and investigate their performance during training. All variants apply identical hyper-parameters such as that they all apply $k=4$ samples for gradient estimations, which we detail in Appendix~\ref{appendix:hypers}.

\begin{figure}[t]
\centering
\includegraphics[width=2.5in]{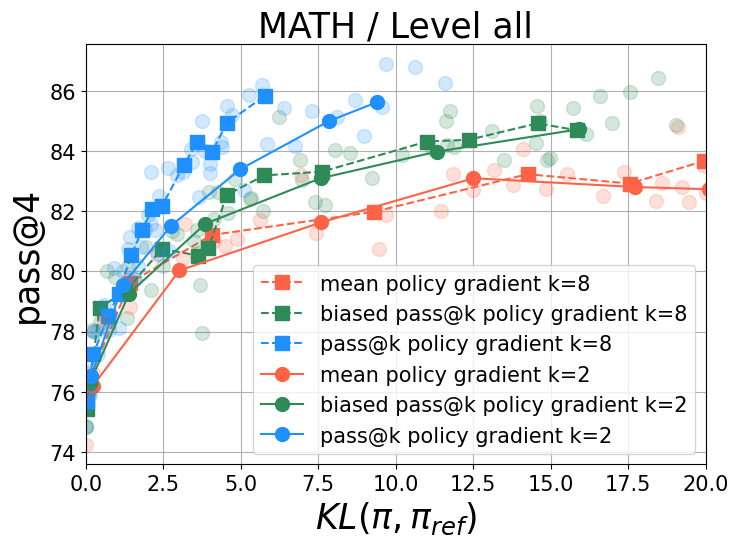}
\caption{\small{Ablation with number of samples $k$. We vary the number of samples $k$ for each gradient update for the pass@$k$ objective. We observe a more efficiency gains for the pass@$k$ gradient estimates compared to the policy gradient baseline. Importantly, note that as $k$ varies, the pass@$k$ algorithm changes its objectives.}}
\label{figure:pass@k-ablation}
\end{figure}

\subsection{Training performance}

\paragraph{Pass@$k$.}
Figure~\ref{figure:math-train-max-k-8b} shows the training performance of a few algorithmic variants over time. The MATH dataset contains 5 difficulty levels (the easiest is level 1 and the hardest level 5). We break down performance based on the levels and report both mean performance and pass@$k=4$.

A few observations are in order: when measuring the mean performance, the mean policy gradient seems to obtain a slight improvement over algorithms that aim for the pass@$k$ performance. Meanwhile, the dedicated $k$-sample gradient estimates generally perform better in pass@$4$. The biased variant (green) seems to generally strike a trade-off between the pass@$k$ gradient estimate (blue) and the baseline, which is compatible with the theoretical designs. The breakdown shows that for pass@$k$, most performance difference comes from the more difficulty categories than easy ones. This might be explained by the fact that since 
the $k$-sample gradient estimates update policy more conservatively, they tend to have relatively more updates for the more difficult prompts, compared to the mean policy gradient.

\paragraph{Majority voting.}
In a similar vein, Figure~\ref{figure:math-train-mv-k-8b} shows the training performance of a few algorithmic variants. We break down performance by difficulty level and measure both the mean performance and the majority voting at $k=4$ performance. Interestingly, under this set of comparison, the policy gradient algorithm with majority voting at $k=4$ algorithm (blue) seems to obtain a better training performance overall, both in terms of the mean and the majority voting at $k$ performance. Meanwhile, the biased variant seems to perform similarly as the policy gradient baseline.

The breakdown also shows that most improvements seem to come from higher level of difficulty compared to the easier categories. This might be explained via a similar argument as the pass@$k$ algorithm: the $k$-sample gradient estimates tend to have sparser advantage and hence less update for the model over a fixed number of training steps. This makes them slightly more KL-efficient since they focus more on more challenging problems.

\begin{figure*}[h]
\centering
\includegraphics[width=6in]{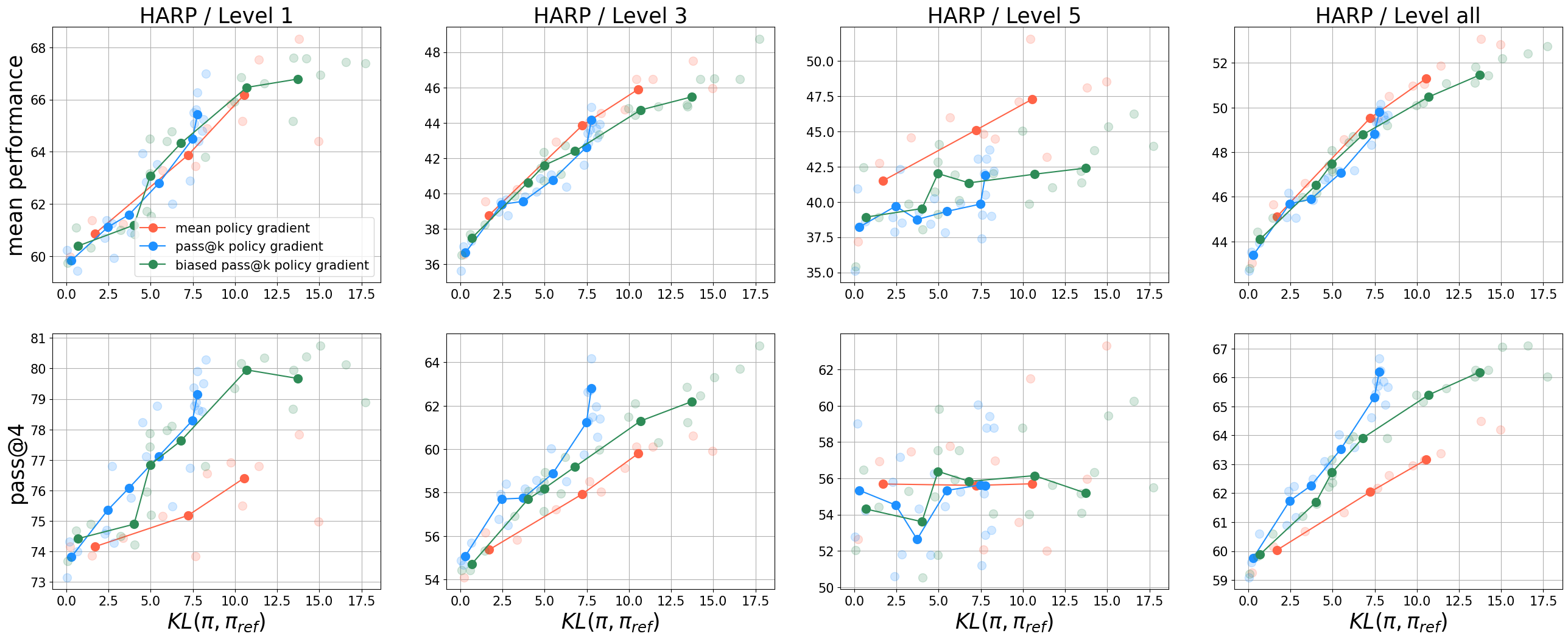}
\caption{\small{HARP training pass@$k$ 70B model. We observe that the regular policy gradient estimate improves over the pass@$k$ variants for the mean performance metric, while under-performing on the pass@$k$ objective. Such a trade-off is less significant for the MATH dataset, where we speculate that the 70B model is too powerful and learning signals are too sparse to make a difference.}}
\label{figure:har-train-max-k-70b}
\end{figure*}

\subsection{Ablations} 

We carry out ablations along a few dimensions of interest.

\paragraph{Number of samples $k$.} We first ablate with the number of samples $k$ used for constructing the gradient estimates. Throughout, we still apply the same hyper-parameter including batch size as the base experiments. We focus on the pass@$k$ objective here and discuss more about the majority voting objective in Appendix~\ref{appendix:hypers}.

Figure~\ref{figure:pass@k-ablation} shows the training performance as we vary the number of samples $k\in\{2,8\}$. Overall, we observe that as $k$ increases, there is no discernible difference in the performance of the policy gradient estimates. However, increasing $k$ seems to make the pass@$k$ sample estimates more \emph{KL-efficient}, i.e., $k=8$ obtains a better performance than $k=2$ given a fixed KL budget. The performance improvement over the policy gradient estimate is also quite significant. However, though the pass@$k$ algorithm is more KL-efficient as $k$ increases, they tend to also deviate less from the reference policy $\pi_\text{ref}$ with a fixed number of learning steps. This means that to achieve a target level of pass@$k$ performance, the regular policy gradient baseline, despite being less KL-efficient, can still retain an advantage due to its faster deviation from the reference policy.

\paragraph{Model size.} We also replicate similar experiments across a few other model sizes: 3B and 70B, both from the Llama 3 model family. We notice that 3B model tends to display a similar trend of performance trade-off compared to the 8B: the specialized $k$-sample estimates tend to outperform baselines on $k$-sample objective during training, while slightly under-performing for the mean performance. The performance improvement also mostly stems from the more difficult subset of the dataset.

The performance trade-off for the 70B case is less clear. The policy gradient baseline remains performant even for the $k$-sample objectives. We suspect that this is because the MATH dataset is a relatively simple task for the 70B model \citep{yue2024harp}. Indeed, both the pass@$k=4$ and majority voting at $k=4$ are $\sim 90\%$ for the MATH training set. This implies that the $k$-sample gradient estimates will not produce much signal, given a fixed training budget.

\paragraph{Training on HARP.} In light of the previous observation, we examine HARP dataset \citep{yue2024harp}, a carefully curated math dataset consisting of hard problems. Though HARP was meant for evaluation, we monitor algorithmic variants' training performance as a sanity check. Figure~\ref{figure:har-train-max-k-70b} shows the results for the pass@$k$ objectives. For the pass@$k=4$ objective, we observe a significant improvement of $k$-sample gradient estimates compared to the baseline algorithm. The mean performance of various algorithms is similar, as expected. We also observe improvement in the majority voting metric, as well as for the 8B model - though the improvement appears less significant compared to the 70B case. This might be because the HARP dataset is too difficult for 8B models. See results in Appendix~\ref{appendix:hypers}.

\begin{figure*}[h]
\centering
\includegraphics[width=6in]{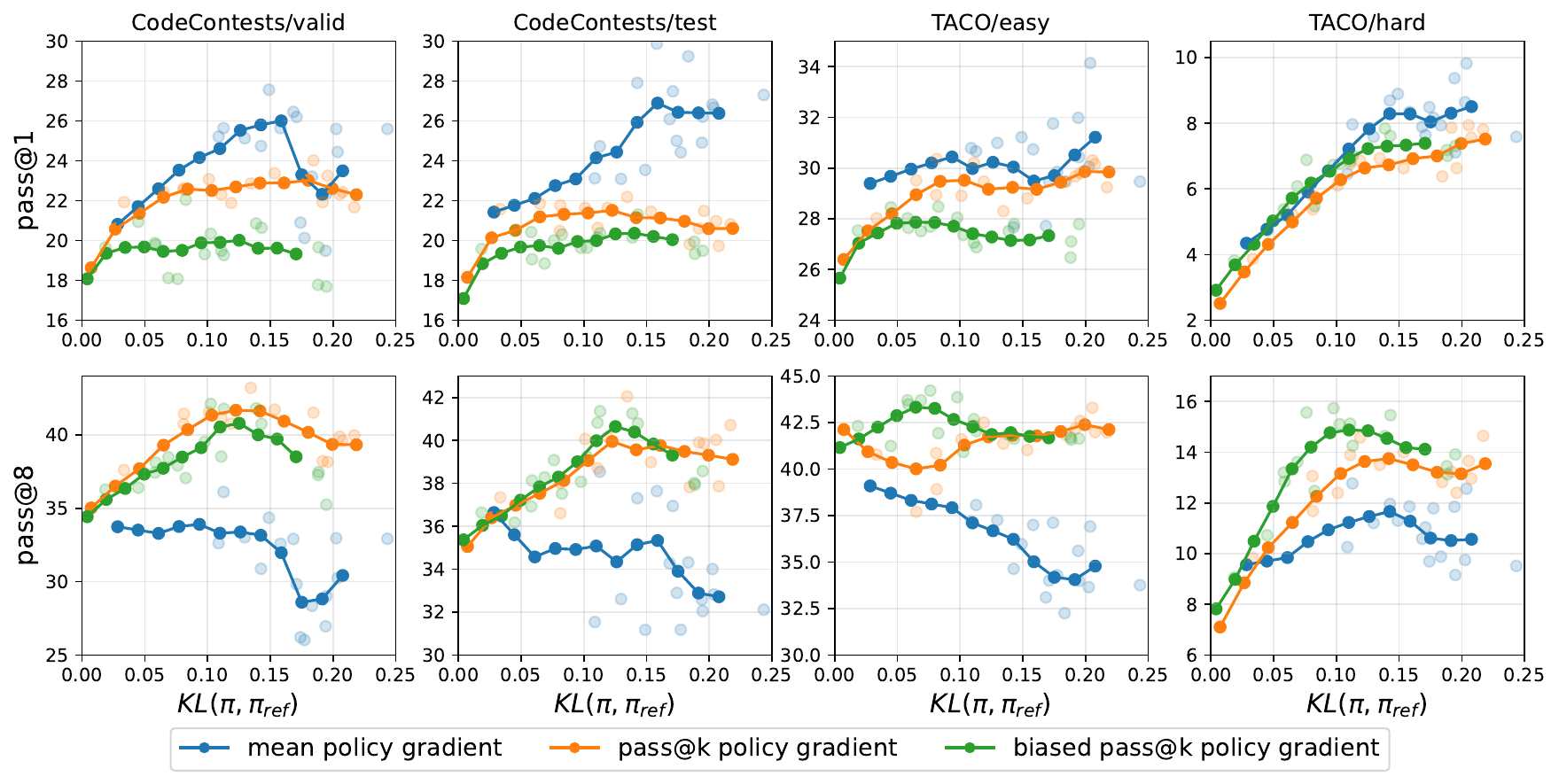}
\caption{\small{Code generation task evaluation performance. We observe a clear trade-off between pass@1 and pass@8 on CodeContests and TACO using Llama 3.1 70B - mean policy gradient achieves the best mean (pass@1) performance, while pass@$k$ gradient variants clearly achieve much better performance fo the pass@$k$ performance.}}
\label{figure:loo-dmc-taco-70b}
\vskip -0.2in
\end{figure*}

\begin{figure*}[h]
\centering
\includegraphics[width=6in]{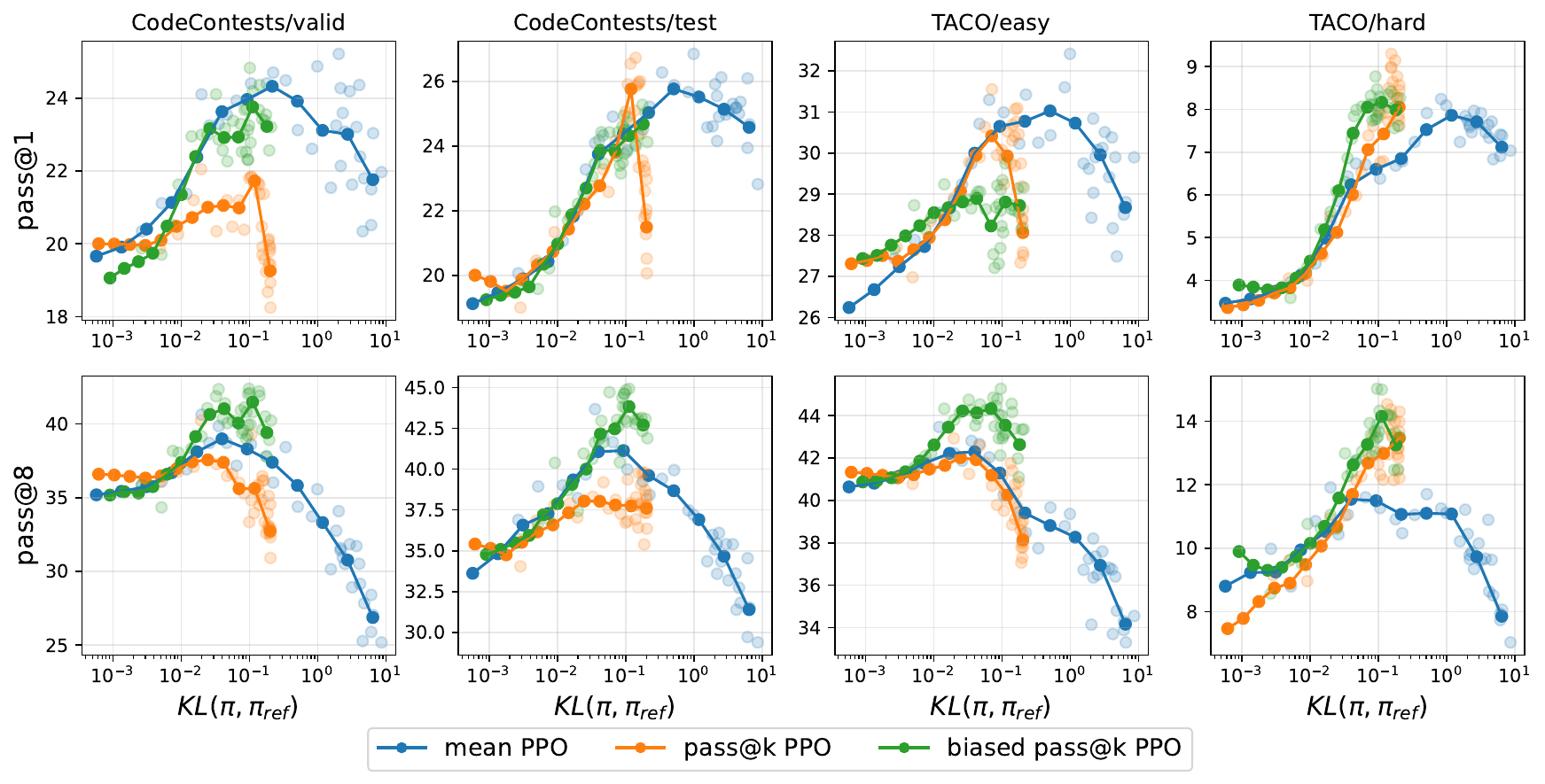}
\caption{\small{PPO code generation evaluation results with Llama 3.1 70B. We combine $k$-sample gradient estimates with a PPO implementation stack and compare with the default mean policy gradient baseline. We see that biased pass@$k$ gradient estimate especially achieves a good performance trade-off. Details of how the $k$-sample gradient variants are combined with PPO can be found in Appendix~\ref{appendix:8b_dmc}.}}
\label{figure:dmc_taco_ppo_70b}
\end{figure*}

\subsection{Evaluation performance}

Throughout evaluation on the MATH test set, we apply a temperature sampling with $\tau=1,\text{top p}=1$ to be identical to the training setup. In Figure~\ref{figure:math-test-max-k-8b} we showcase the evaluation metrics for the pass@$k$ objectives. We show the majority voting metrics in Appendix~\ref{appendix:hypers}. We noted that training and evaluation performance are not perfectly correlated - we speculate this might be because since regular policy gradient estimate performs more effective updates during training, it makes up for more generalization gap from training to evaluation. 

In particular, the policy gradient baseline performs quite well for the evaluation pass@$4$, despite being slightly outperformed by the $k$-sample pass@$k$ gradient estimates at training time. We also note that, as training progresses, the policy gradient estimate tends to regress on pass@$k$ when $k$ is large ($k\in\{16,64\}$), arguably due to impacts from sample diversity.

\section{Experiments on code generation tasks}
For code generation, we conduct our experiments on \codecontests~\citep{li2022competition}, a competitive programming benchmark containing 13k problems as training set and evaluate on the valid and test set.
Similar to previous work~\citep{xu2024dpo, gehring2024rlef}, we use a $r=+1/-1$ reward indicating whether the Python code solutions pass all the given tests that come with the dataset. Due to the challenging nature of \codecontests, we use Llama 3.1 70B Instruct model and optimize for pass@8 metrics on the \codecontests training set. To measure generalization, we also report the performance on another competitive programming benchmark, TACO~\citep{li2023tacotopicsalgorithmiccode} on the \emph{easy} and \emph{hard} split, for which we have decontaminated the \codecontests training set against TACO eval set.

Figure~\ref{figure:loo-dmc-taco-70b} shows a clear trade-off between optimizing for pass@1 and pass@8 on \codecontests valid and test set. This trade-off generalizes in-domain to TACO \emph{easy} and \emph{hard} split. Across all evaluation set, mean policy gradient achieves the best pass@1 performance (or mean performance). However, it also clearly degrades on pass@8 as the training progresses. In clear contrast, pass@$k$ policy gradient achieves the best pass@8 performance. The \passatk performance improves since the onset and remains so over the course of training. We do not observe a significant difference between different pass@$k$ gradient variants, since they both achieve much better performance than the baseline.

\begin{table*}[h]
\centering
\caption{The \passatk performance of Llama 3.1 70B Instruct on \codecontests valid/test set up to pass@100. Models are trained on \codecontests training set. The \passatk objectives optimize for $k=8$.}
\vskip 0.1in
\label{tab:70b_dmc_generalization}
\begin{tabular}{llcccccc} 
\toprule
\multirow{2}{*}{Method}           &\multirow{2}{*}{Variant} &\multicolumn{3}{c}{CodeContests / Valid} &\multicolumn{3}{c}{CodeContests / Test}  \\ 
\cmidrule(lr){3-5}\cmidrule(l){6-8}
 & &pass@1 &pass@10 &pass@100 &pass@1 &pass@10 &pass@100              \\ 
\midrule
\multirow{3}{*}{Policy Gradient} &mean &\bf{24.6} &32.9 &38.1 &\bf{26.2} &35.1 &39.8                  \\
 &pass@$k$ &21.7 &\bf{42.2} &\bf{54.9} &20.6 &\bf{41.7} &51.8                  \\
 &biased pass@$k$ &18.0 &38.4 &54.2 &19.3 &39.5 &\bf{51.9}                  \\ 
\midrule
\multirow{3}{*}{PPO} &mean &20.0 &27.5 &33.5 &\bf{24.9} &34.6 &41.1                  \\
 &pass@$k$ &19.2 &37.3 &52.4 &21.2 &41.4 &51.0                  \\
 &biased pass@$k$ &\bf{21.6} &\bf{41.7} &\bf{55.8} &22.6 &\bf{45.1} &\bf{56.3}                  \\
\bottomrule
\end{tabular}
\end{table*}

\paragraph{Integration with PPO.} We also investigate how the $k$-sample gradient variants can be combined with much more sophisticated algorithmic stack to illustrate its generic practical utility. Note that the pass@$k$ objective can be seamlessly integrated into other policy gradient variants, such as PPO \citep{schulman2017}, which adds clipping and an additional value function that serves as the baseline. 
Figure~\ref{figure:dmc_taco_ppo_70b} shows the performance of using PPO. We can observe that PPO with pass@1 objective enters in high-KL regime at the end of the training, due to the fact that its update generally leverages much more dense signals. The sparse nature of \passatk objective keeps the policy under low-KL regime and maintains higher pass@8 performance. For the purpose of training language models with fixation on the reference policy \citep{ziegler2019fine,ouyang2022training}, \passatk objectives achieve a balance for the performance metrics. We leave preliminary results of integrating with GRPO~\citep{shao2024deepseekmathpushinglimitsmathematical} in Appendix~\ref{appendix:8b_dmc}.

\paragraph{Model size.} We also include the experiment result with Llama 3.1 8B Instruct model in Appendix~\ref{appendix:8b_dmc}. Compared to 70B, the performance differences across training objectives are less pronounced. We posit that the model size and the benchmark play a crucial role; for 8B model, the average pass@1 on \codecontests is modest, resulting in sparse reward signal. Intuitively, pass@$k$ objective will zero out the advantage for problems solved more than once. Therefore, we expect the difference to widen as the dataset contains more problems that the model is able to solve multiple times.

\begin{table*}[h]
\centering
\caption{The \passatk performance of Llama 3.1 70B Instruct on TACO \textit{easy} and \textit{hard} split up to pass@100. Models are trained on \codecontests training set. The \passatk objectives optimize for $k=8$. When integrated with pass@$k$ and biased pass@$k$ updates, both policy gradients and PPO have improved pass@$k$ performance, especially for large values of test time $k$.}
\label{tab:70b_taco_generalization}
\vskip 0.1in
\begin{tabular}{llcccccc} 
\toprule
\multirow{2}{*}{Method} &\multirow{2}{*}{Variant} &\multicolumn{3}{c}{TACO/easy} &\multicolumn{3}{c}{TACO/hard}  \\ 
\cmidrule(l){3-8}
 & &pass@1 &pass@10 &pass@100 &pass@1 &pass@10 &pass@100    \\ 
\midrule
\multirow{3}{*}{Policy Gradient} &mean &\bf{29.9} &34.5 &38.2 &\bf{8.1} &9.9 &11.8        \\
 &pass@$k$ &29.6 &\bf{43.2} &48.9 &7.8 &\bf{14.4} &18.0        \\
 &biased pass@$k$ &26.7 &42.5 &\bf{51.5} &7.0 &14.3 &\bf{21.9}        \\ 
\midrule
\multirow{3}{*}{PPO} &mean &27.5 &35.3 &39.5 &7.3 &8.8 &11.1        \\
 &pass@$k$ &\bf{28.0} &41.6 &49.5 &7.6 &14.2 &18.7        \\
 &biased pass@$k$ &27.2 &\bf{44.8} &\bf{53.8} &\bf{7.7} &\bf{14.6} &\bf{21.3}        \\
\bottomrule
\end{tabular}
\end{table*}

\subsection{Generalization to different $k$ for evaluation}

We investigate the impact of different training objectives, when combined with increased inference time budget. Concretely, we study how pass@$k$ performance scales with the number of samples $k$ at evaluation time.

We show in Table~\ref{tab:70b_dmc_generalization} and Table~\ref{tab:70b_taco_generalization} the \passatk performance across different $k$ of Llama 3.1 70B Instruct model trained using different objectives. Models are trained with 6400 gradient update steps. For \passatk objective, models trained to optimize pass@$k$ ($k$ = 8) can generalize to pass rate with different $k$ up to $k = 100$. Amazingly, the performance improvements scales with $k$: for both CodeContest and TACO, the improvements go from $~5-10\%$ at $k=10$ to $~10-20\%$ at $k=100$. In general, improvements on hard and test splits are a bit less than improvements on easy and validation splits.

\paragraph{Generalization to $k=100$ at test time.} We show in Table~\ref{tab:70b_dmc_generalization} and Table~\ref{tab:70b_taco_generalization} (in the Appendix) that Llama 3.1 70B Instruct model trained to optimize pass@$k$ ($k$ = 8) can generalize to pass rate with different $k$ up to $k = 100$, with over $+10\%$ solve rate improvement over the baseline. on validation and test set.

\section{Discussion and limitations}

Lots of efforts remain to adapt the training methods to inference time objectives beyond this work. Applying the methodology here naively would incur large computational burden \citep{silver2016,silver2018general,brown2018superhuman,brown2019superhuman}. Furthermore, many recent inference time algorithms do not treat each sample equally, such as self-correction, reflection and more complicated logic \citep{yao2022react,huang2023large,asai2023self}. This requires careful algorithmic improvements for better credit assignment.

\newpage
\section*{Impact Statement}
This paper presents work whose goal is to advance fundamental algorithmic development. There are many potential societal consequences of our work, none which we feel must be specifically highlighted here.

\bibliography{your_bib_file}
\bibliographystyle{icml2025}

\clearpage
\onecolumn

\begin{appendix}

\section*{\centering APPENDICES: Optimizing Language Models for Inference Time Objectives using Reinforcement Learning}

\section{Hyper-parameters and experimental details}
\label{appendix:hypers}
We experimented with the Llama 3 model of size 3B, 8B and 70B \citep{dubey2024llama}. All experiments are conducted with identical hyper-parameter settings: we always apply a batch size of $B=64$ prompts per update, and sample $k=4$ distinct generations per prompt by default.  All training and evaluation sampling are conducted at a temperature of $\tau=1$ and with $\text{top-p}=1$.

We train on the MATH training set with $7500$ examples and evaluate on the test set with $5000$ examples \citep{hendrycks2021measuring}. A supervised fine-tuning on the training set is conducted to warm up the RL training. As part of the ablation, we also train on the HARP dataset, which consists of about 4300 examples of difficult math problems harvested from a few public sources \citep{yue2024harp}.

For both training and evaluation, we provide system instructions that ask the model to generate a response with step-by-step solution, followed by a final conclusion phrased as \emph{the final answer is} followed by the answer predicted by the model. This is consistent with the prompt structure discussed for Llama models \citep{dubey2024llama,yue2024harp}.

\subsection{Ablation on number of samples $k$}

\paragraph{Pass@$k$.} Figure~\ref{figure:pass-at-ablation-all} shows the full results for the ablation results. We see as $k$ increases, the $k$-sample gradient estimates produce performance improvements on the pass@$k$ objective. More significant improvements are observed for the more difficult split of the dataset.

\begin{figure*}[h]
\centering
\includegraphics[width=6.7in]{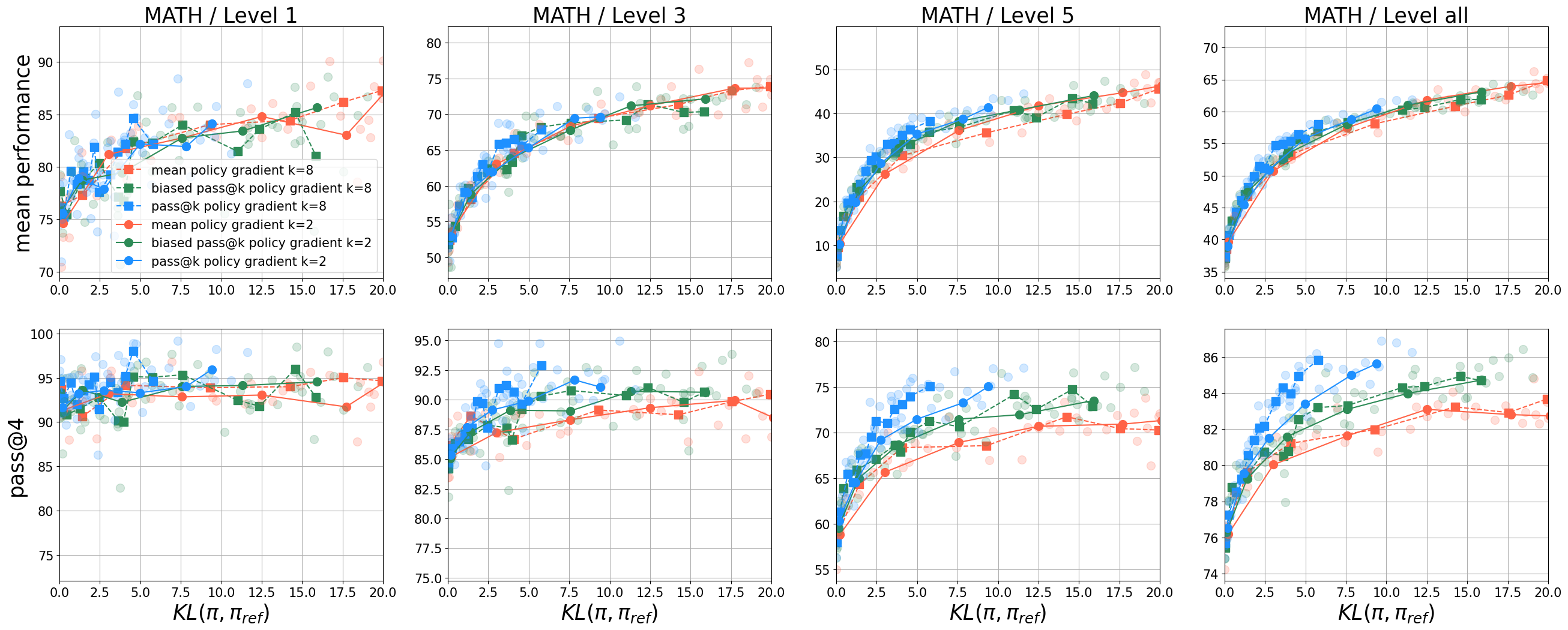}
\caption{\small{Ablation with number of samples $k$ for majority voting. We vary the number of samples $k$ for each gradient update for the pass@$k$ objective. We observe a more efficiency gains for the pass@$k$ gradient estimates compared to the policy gradient baseline. Importantly, note that as $k$ varies, the pass@$k$ algorithm changes its objectives.}}
\label{figure:pass-at-ablation-all}
\end{figure*}

\paragraph{Majority voting.} Figure~\ref{figure:mv-ablation-all} shows the ablation results for the majority voting based results and $k$-sample gradient based algorithm. We observe a slight improvement of the $k$-sample based algorithms over policy gradient baseline, though the impact of the number of samples $k$ is less signficant compared to the pass@$k$ case.

\begin{figure*}[h]
\centering
\includegraphics[width=6.7in]{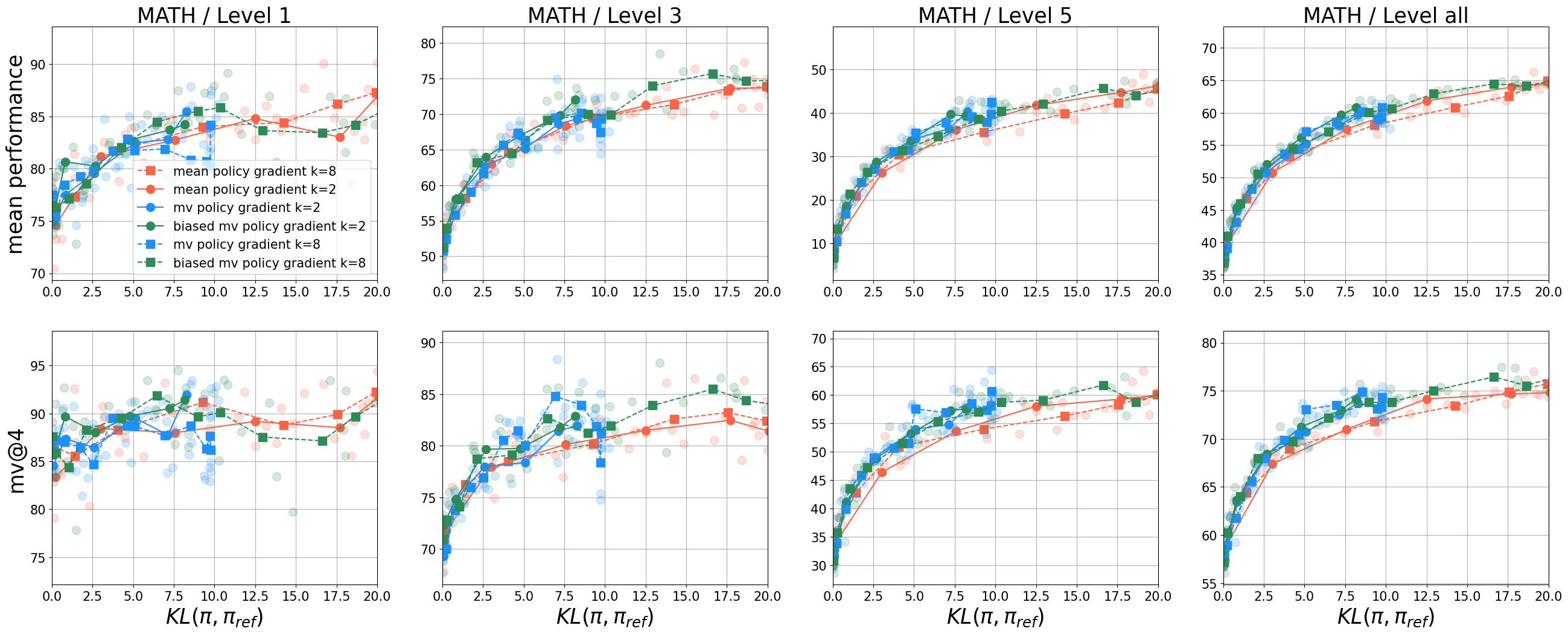}
\caption{\small{Ablation with number of samples $k$ for majority voting. We vary the number of samples $k$ for each gradient update for the pass@$k$ objective. We observe a modest efficiency gains for the pass@$k$ gradient estimates compared to the policy gradient baseline but the impact of the number samples $k$ is less significant.}}
\label{figure:mv-ablation-all}
\end{figure*}

\subsection{Experimental details and additional result on \codecontests and TACO}
\label{appendix:8b_dmc}

\paragraph{Experimental details.} We experimented with Llama 3.1 8B and 70B Instruct model. For both model we use the same hyperparameters. We use a learning rate $2e^{-7}$, constant learning rate scheduling with $50$ warmup steps and weight decay of $0.1$. We sample $k=8$ generations per prompt. We update the model with a mini batch size 2 with sequence length $8192$ and train in total 8k gradient update steps. In training, sampling is conducted at a temperature of $\tau=1$ and with $\text{top-p}=1$. In evaluation, we sample at a temperature of $\tau = 1$ and with $\text{top-p}=0.95$. Both pass@1 and pass@8 are estimated out of 20 samples. We evaluate the correctness of generated Python code using the official codebase of~\citet{li2022competition}. Our asynchronous online training decouples the training and the inference, for which we include an importance sampling term to correct the slightly off-policy nature between current policy and the behavior policy. The original \codecontests training set contains 13328 problems. We follow the decontamination process presented by ~\citet{zheng2024makes} to decontaminate \codecontests training set against TACO evaluation set. We further remove problem instances with less than $5$ test cases. This results in total 12275 problems which we use to train our model. 

\paragraph{Integrating with PPO.} We can integrate advantage clipping and an additional value model $V_{\psi}$ to serve as an additional baseline. Given a problem $x$, we sample multiple responses $\{y_1, y_2, ..., y_k\}$ from $\pi_{\theta_\text{old}}$,
the training objective aims to maximize the following objective:
\begin{align*}
J^{\pi}(\theta) &= \mathbb{\hat{E}}_{y \sim \pi_{\theta_\text{old}}} \left[ \min \left( \frac{\pi_\theta (y | x)}{\pi_{\theta_\text{old}} (y | x)} \hat{A}, \operatorname{clip} \left( \frac{\pi_\theta (y | x)}{\pi_{\theta_\text{old}} (y | x)}, 1-\epsilon, 1+\epsilon \right) \hat{A}  \right) \right]  \\
& = \frac{1}{k} \sum_{i=1}^{k} \min \left( \frac{\pi_\theta (y_i | x)}{\pi_{\theta_\text{old}} (y_i | x)} \hat{A}_i, \operatorname{clip} \left( \frac{\pi_\theta (y_i | x)}{\pi_{\theta_\text{old}} (y_i | x)}, 1-\epsilon, 1+\epsilon \right) \hat{A}_i  \right), \\
\hat{A}_i  &= R_i - V_{\psi}(x),
\end{align*}
where $R_i$ takes the following different forms according to the training objective:
\begin{align*}
\text{Mean(pass@1) training objective:} & \quad R_i = r_i \\
\text{Pass@$k$ training objective:} & \quad R_i = \max_{j \in \{1, ..., k\}} r_j - \max_{j \neq i} \ r_j \\
\text{Biased \passatk training objective:} & \quad R_i = \max_{j \in \{1, ..., k\}} r_j - \max_{j \neq i} \ r_j - \frac{1}{k} \sum_{k=1}^{k} \left [ \max_{j \in \{1, ..., k\}} r_j - \max_{j \neq k} \ r_j \right ]
\end{align*}
with $r_i$ being the $+1/-1$ binary reward of whether the code $y_i$ is correct. We train the value model $V_{\psi}$ to minimize the following clipped value loss:
\begin{align*}
L^{V}(\psi) &= \mathbb{\hat{E}} \left[ \frac{1}{2} \max \left( \left(V_\psi(x) - R_i\right)^2 , \left( \operatorname{clip} \left(V_\psi(x), V_{\psi_\text{old}}(x) - \alpha, V_{\psi_\text{old}}(x) + \alpha \right) - R_i\right)^2\right) \right].
\end{align*}
We set $\epsilon = \alpha = 0.2$ in our experiments.

\paragraph{Integrating with GRPO.} We can remove the baseline produced by the value model, $V_{\psi}(x)$, in the advantage estimation in the above PPO objective. The change is therefore only $\hat{A}_i = R_i$ and the rest remains unchanged. Interestingly, if we take the pass@$k$ training objective $R_i = \max_{j \in \{1, ..., k\}} r_j - \max_{j \neq i} \ r_j$ and treat it as the ``effective'' reward, the biased pass@$k$ training objective coincides with the recently proposed Dr. GRPO~\citep{liu2025understandingr1zeroliketrainingcritical} in which the baseline is the empirical mean. We can further divide it by the standard deviation to make it similar to the GRPO objective. We show the results of Llama 3.1 8B Instruct training on GRPO or Dr. GRPO objective in Table~\ref{tab:8b_grpo}, with the normal pass@1 training (mean objective) as the baseline. The training has 8k gradient steps using the same set of hyperparameters. We report the number by averaging the performance of the last 5 checkpoints, i.e., checkpoints at 8k, 7.8k, ..., 7.2k steps, to ensure a non cherry-picked results.

\begin{table*}[h]
\centering
\caption{The \passatk performance of Llama 3.1 8B Instruct on CodeContests and TACO with GRPO / Dr. GRPO objective. Models are trained on \codecontests training set. The \passatk objectives optimize for $k=8$. Pass@$k$ training consistently shows better pass@10 performance across different policy gradient variants.}
\vskip 0.1in
\label{tab:8b_grpo}
\begin{tabular}{lcccccccc} 
\toprule
\multirow{2}{*}{Variant} & \multicolumn{2}{c}{CodeContests / Valid} & \multicolumn{2}{c}{CodeContests / Test} & \multicolumn{2}{c}{TACO/easy} & \multicolumn{2}{c}{TACO/hard}  \\ 
\cmidrule(lr){2-3}\cmidrule(lr){4-5}\cmidrule(lr){6-7}\cmidrule(lr){8-9}
 & pass@1 & pass@10 & pass@1 & pass@10 & pass@1 & pass@10 & pass@1 & pass@10               \\ 
\midrule
mean (Dr. GRPO) & 7.17 & 10.61 & \bf{8.88} & 14.89 & 16.47 & 22.91 & \bf{5.40} & 6.89                  \\
mean (GRPO) & \bf{9.21} & 13.88 & 8.39 & 14.41 & \bf{17.89} & 24.11 & 4.74 & 7.06                  \\
biased pass@$k$ (Dr. GRPO) & 6.79 & 15.71 & 7.03 & \bf{16.44} & 15.87 & \bf{27.01} & 4.86 & \bf{8.55}                  \\
biased pass@$k$ (GRPO) & 8.11 & \bf{16.83} & 6.67 & 15.45 & 14.51 & 26.08 & 4.12 & 7.37                  \\
\bottomrule
\end{tabular}
\end{table*}

\paragraph{Llama 3.1 8B Instruct Performance.}

\begin{figure*}[h!]
\centering
\includegraphics[width=6in]{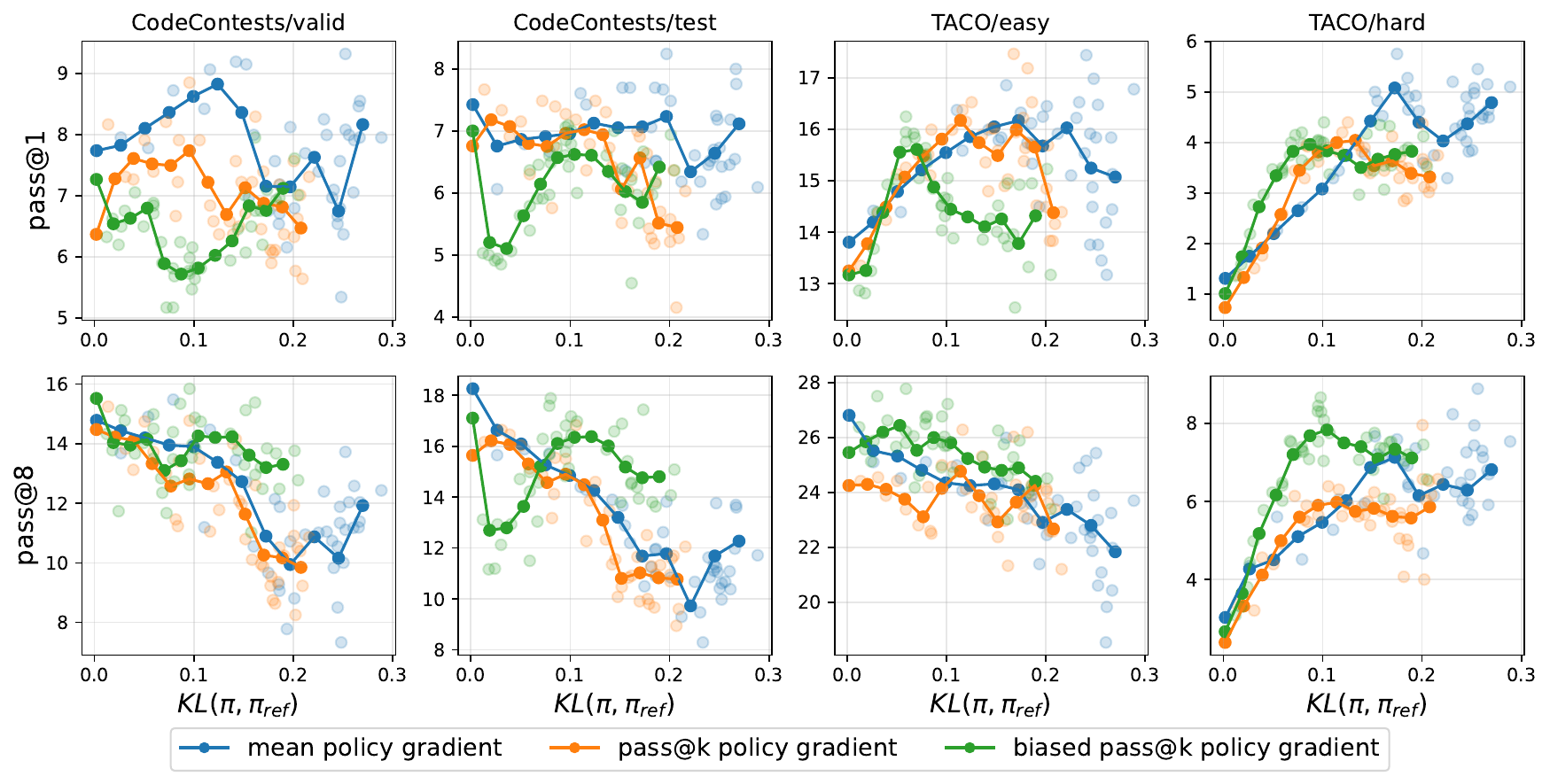}
\caption{\small{Llama 3.1 8B Instruct performance on \codecontests and TACO. We compare 3 methods: the mean policy gradient with leave-one-out control variate that optimizes for pass@1, the \passatk policy gradient and the biased \passatk policy gradient with mean advantage serving as the baseline.}}
\label{figure:loo-8b}
\end{figure*}

\begin{figure*}[h!]
\centering
\includegraphics[width=6in]{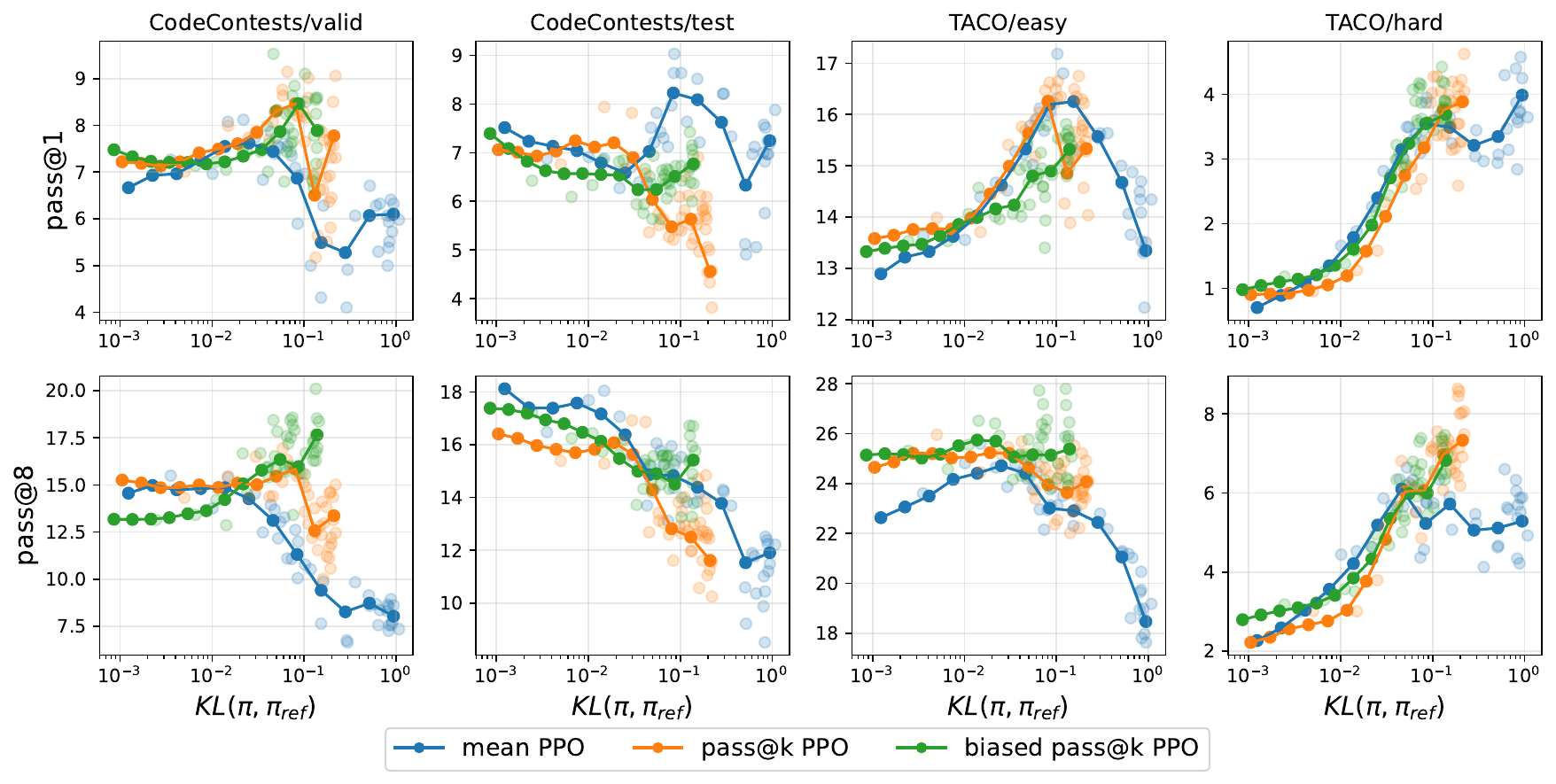}
\caption{\small{Llama 3.1 8B Instruct performance on \codecontests and TACO. We incorporate the PPO training objective and compare 3 variants: the vanilla PPO objective that optimizes the pass@1 performance, the \passatk objective and the biased \passatk objective with mean advantage serving as the baseline.}}
\label{figure:ppo-8b}
\end{figure*}

For Llama 3.1 8B Instruct model, we show in Figure~\ref{figure:loo-8b} the performance of using mean policy gradient, \passatk policy gradient and biased \passatk policy gradient. We also show in Figure~\ref{figure:ppo-8b} the performance when integrating the training objective with PPO.

\subsection{Training on HARP}

We train both 8B and 70B on the HARP dataset, with both the pass@$k$ and majority voting based algorithms and objectives.

\paragraph{8B model.} Figure~\ref{figure:harp-8b-max-k} and Figure~\ref{figure:harp-8b-mv-k} show the results for the pass@$k$ and majority voting respectively. Overall, we see that the regular policy gradient algorithm is modestly outperformed by the $k$-sample based algorithms, in terms of the KL-efficiency trade-off against the training performance. 

However, one observation that is compatible with prior results, is that the regular policy gradient algorithm tends to allow the policy to deviate more from the reference policy using a fixed number of updates compared to the $k$-sample based algorithms. This presents a practical trade-off - if we want to obtain a good level of performance with a \emph{fixed} number of training steps, the policy gradient algorithm might be a better option since it is more compute-efficient.

\begin{figure*}[h!]
\centering
\includegraphics[width=6.7in]{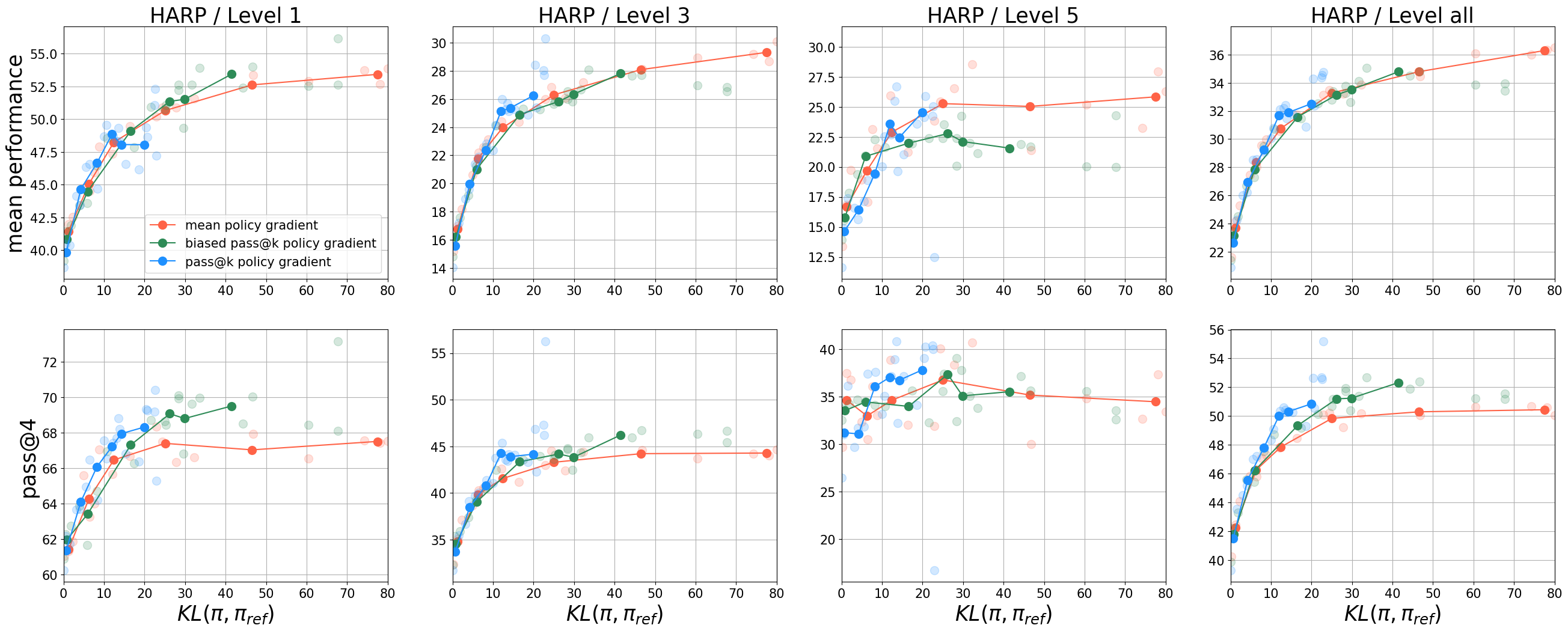}
\caption{\small{Training reward for pass@$k$ on the HARP dataset for 8B model. We can see observe relative performance efficiency of $k$-sample gradient variants compared to the baseline method. The gains are overall less significant compared to the 70B model, whose model capacity is more suitable for the challenging HARP dataset.}}
\label{figure:harp-8b-max-k}
\end{figure*}

\begin{figure*}[h!]
\centering
\includegraphics[width=6.7in]{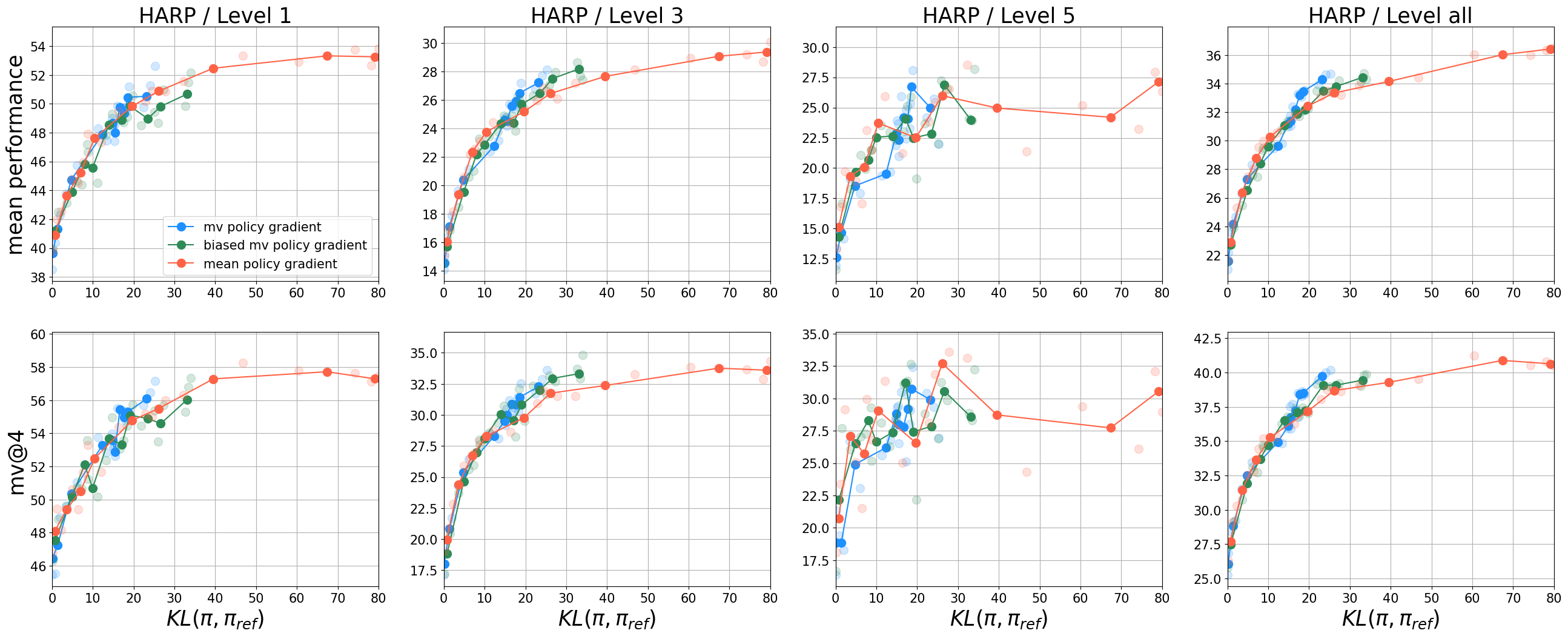}
\caption{\small{Training reward for majority voting on the HARP dataset for 8B model. Overall, the training performance improvement is less significant compared to the pass@$k$ case.}}
\label{figure:harp-8b-mv-k}
\end{figure*}

\paragraph{70B model.} Figure~\ref{figure:harp-70b-mv-k} shows the HARP training results for the majority voting case. We see that the improvement is less clear for both the mean and the majority voting performance - the three algorithmic variants seem to obtain similar performance, with the $k$-sample policy gradient algorithm obtaining a very slight advantage over others.

\begin{figure*}[h!]
\centering
\includegraphics[width=6.7in]{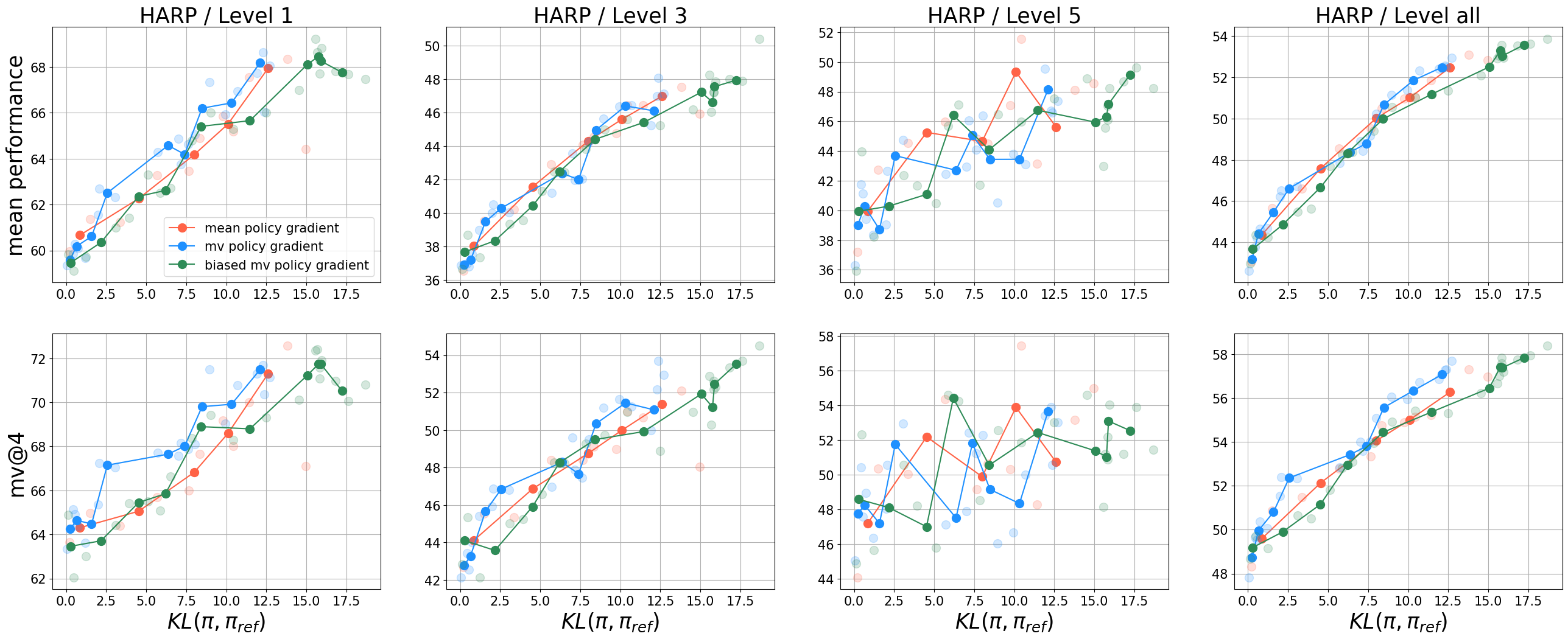}
\caption{\small{Training reward for majority voting on the HARP dataset for 70B model.}}
\label{figure:harp-70b-mv-k}
\end{figure*}

\subsection{Evaluation}

Figure~\ref{fig:math_test_mv_k} shows the evaluation performance for the majority voting on MATH, using the 8B model. Across the various baselines we compare, we note that the biased $k$-sample policy gradient algorithm seems to slightly outperform the policy gradient algorithm as KL is large, though the policy gradient algorithm itself is clearly a strong baseline. 

The unbiased $k$-sample policy gradient algorithm seems to slightly underperform on evaluation, which contrasts the strong training performance in Figure~\ref{figure:math-train-mv-k-8b}. We speculate that this suggests an intriguing interaction between the training and testing performance worthy of further investigation: likely the merits of such $k$-sample based algorithms depend on the nature and the size of the dataset, which is better tested out in practical applications.

\begin{figure*}[h!]
\centering
\includegraphics[width=6.7in]{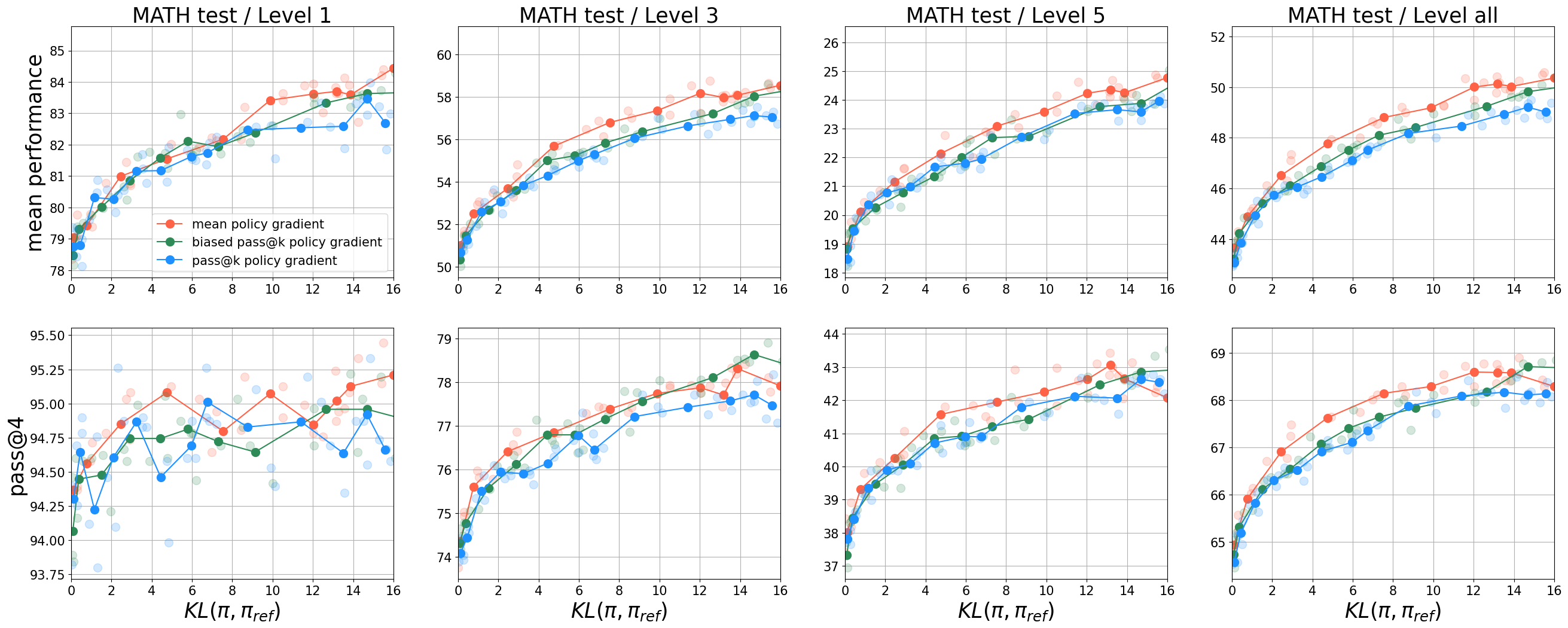}
\caption{\small{MATH test 8B model. We carry out evaluation on the MATH test set, using the same sampling hyper-parameter as the training time. We observe that the regular policy gradient baseline obtains a strong performance for pass@$k$ with $k\in\{1,4\}$. However, as $k$ increases, it tends to be outperformed by the $k$-sample gradient variants largely due to a more significant drop in the sample diversity.}}
\label{figure:math-test-max-k-8b}
\end{figure*}

\begin{figure*}[h!]
\centering
\includegraphics[width=6.7in]{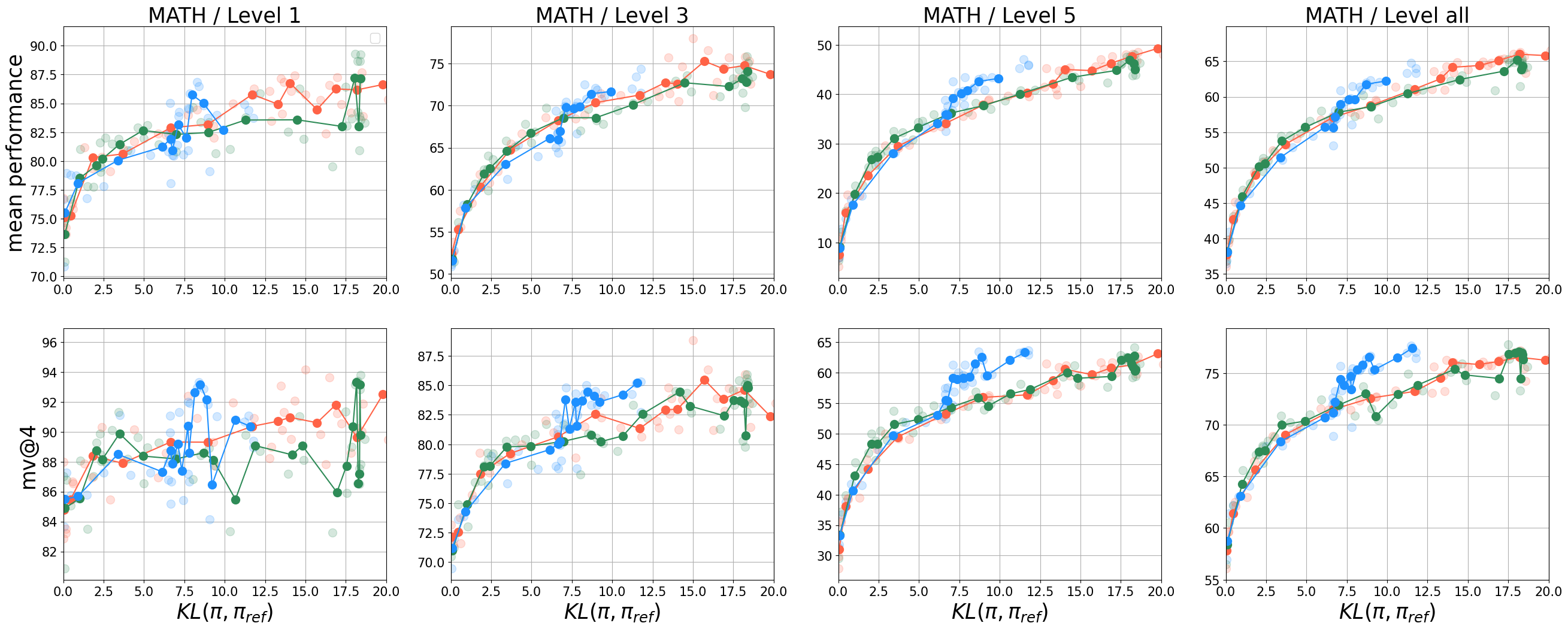}
\caption{\small{MATH training majority voting 8B model. We compare three baselines: regular mean policy gradient algorithm and two variants of majority voting policy gradient algorithms (unbiased and biased). We observe the unbiased majority voting policy gradient algorithm improves slightly over the other two alternatives.}}
\label{figure:math-train-mv-k-8b}
\end{figure*}

\begin{figure*}[h!]
\centering
\includegraphics[width=6.7in]{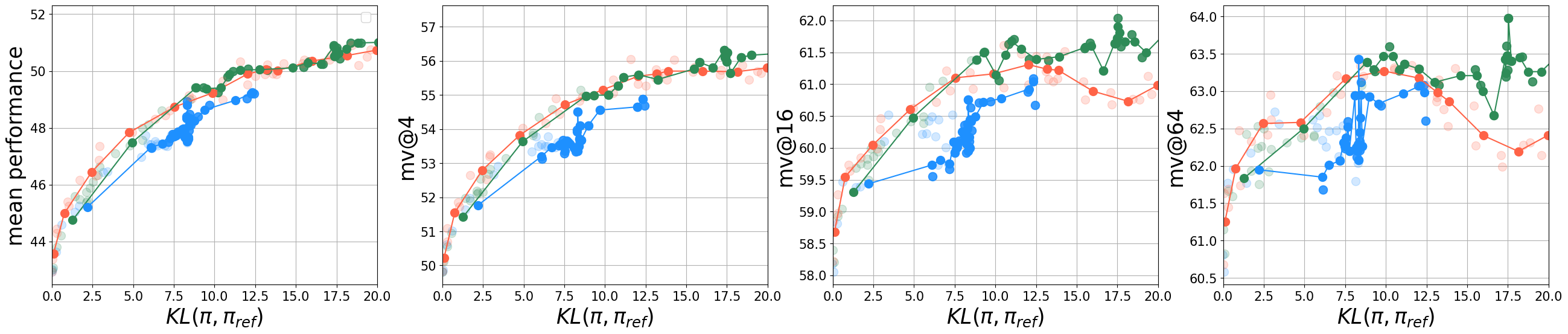}
\caption{\small{MATH test 8B model for majority voting. We carry out evaluation on the MATH test set, using the same sampling hyper-parameter as the training time. We observe that the regular policy gradient baseline obtains a strong performance overall, though slightly outperformed by the biased $k$-sample gradient algorithm. The original unbiased $k$-sample gradient algorith, despite performing better at training reward, is slightly underperforming for evaluation.}}
\label{fig:math_test_mv_k}
\end{figure*}

\section{Additional discussion}
\label{appendix:discuss}

\subsection{Biased objective for majority voting}

Following the recipe to derive biased $k$-sample objective for the $k$-sample objective, we discuss the case for majority voting. Assume there are $m$ unique answers and they are ordered by count $a_{(i)},i\in[m]$. It is not difficult to show that, the resulting objective is a weighted objective of the most common and second most common answer
\begin{align*}
   \mathbb{E}\left[ P(y)\cdot r\left(a_{(m)}\right) + \left(1-P(y)\right)\cdot r\left(a_{(m-1)}\right) \right],
\end{align*}
where $P(y)$ is the probability that the leave-one-out majority voted answer is the second most common answer. Intuitively, the leave-one-out majority voted answer can either be $a_{(m)}$ (when for example $|a_{(m)}|\geq |a_{(m-1)}|+2$) or the second most common answer $a_{(m-1)}$.

\subsection{Interpolation between $k$-sample objectives and mean objective}

We have hinted at the observation that the biased $k$-sample objective is a smoother objective than the original $k$-sample objective. The biased objective also approaches the mean objective $\frac{1}{k}\sum_{i=1}^k r_i$ which is arguably the most smooth objective one can construct.

\paragraph{Leave-$p$-out objectives.} Extending the idea of leave-one-out further, we discuss \emph{leave-$p$-out}, which depicts a spectrum of objectives interpolating the original $k$-sample objective and the mean objective. We focus on the pass@$k$ objective as follows
\begin{align}\label{eq:leave.p.out}
    \frac{1}{\left( {k \atop p}\right)} \mathbb{E}\left[\sum_{s\in S_p} \max_{i\notin s} r_i\right],
\end{align}
where $S_p$ is the set of all subsets of $p$ indices from $\{1,\dots, k\}$, with $\left( {k \atop p}\right)$ being its cardinality. Notice that for $p=0$ we recover the initial problem of the pass@$k$ objective. For $p=1$ we get the de-meaned objective introduced in the previous section. For $p=k-1$ we get the problem of maximizing the expected rewards. For $p=2$ we get the objective
\begin{align*}
    \frac{2}{k(k-1)} \mathbb{E}\left[\sum_{i\neq j} \max_{l\neq \{i,j\}} r_l \right],
\end{align*}

A practical implementation of a gradient estimate of the objective \eqref{eq:leave.p.out} can be written as the de-meaned gradient using the advantages $\max_i r_i - \max_{i\notin s} r_i$ for $s\in S_p$. Indeed: 
\begin{align*}
& \nabla \mathbb{E}\left[  \frac{1}{\left( {k \atop p}\right)}  \sum_{s\in S_p}\max_{i\notin s} r_i \right]\\
&= \mathbb{E}\left[ \left( \sum_{i\in \{1,\dots ,k\}} \nabla\log\pi(y_i) \right) \frac{1}{\left( {k \atop p}\right)}\sum_{s'\in S_p} \max_{i\notin s'} r_i \right]\\
&= \mathbb{E}\left[ \frac{1}{\left( {k-1 \atop p-1}\right)} \sum_{s\in S_p} \left( \sum_{i\in s} \nabla\log\pi(y_i) \right) \left( \frac{1}{\left( {k \atop p}\right)}   
 \sum_{s'\in S_p} \max_{i\notin s'} r_i \right)\right]\\
&= \frac{1}{\left( {k-1 \atop p-1}\right)} \mathbb{E}\left[ \sum_{s\in S_p} \left( \sum_{i\in s} \nabla\log\pi(y_i) \right) \left( \frac{1}{\left( {k \atop p}\right)} \sum_{s'\in S_p} \max_{i\notin s'} r_i - \max_{i\notin s} r_i \right)\right]\\
&= \frac{1}{\left( {k-1 \atop p-1}\right)} \mathbb{E}_\pi \left[\sum_{s\in S_p} \left( \sum_{i\in s} \nabla\log\pi(y_i) \right) \left( \underbrace{\max_i r_i - \max_{i\notin s} r_i}_{=: A_s} - 
\frac{1}{\left( {k \atop p}\right)} \sum_{s'\in S_p}
A_{s'} \right)\right].
\end{align*}
For example, the corresponding gradient for $p=2$ is
\begin{align*}
&\frac{1}{k-1} \mathbb{E}_\pi \left[\sum_{i\neq j} \left( \nabla\log\pi(y_i)+ \nabla\log\pi(y_j)\right) \left( \underbrace{\max_l r_l - \max_{l\notin \{ i, j\} } r_l}_{=: A_{i,j}} - 
\frac{2}{k(k-1)} \sum_{i'\neq j'} A_{i',j'} \right)\right].
\end{align*}

\paragraph{Softmax objectives.} An alternative approach to interpolate between $k$-sample objective and the mean is to through softmax. For example, consider the objective
\begin{align*}
    \mathbb{E}\left[\sum_{i=1}^k p_i r_i\right]
\end{align*}
where $p_i\propto\exp(\beta r_i)$ is the softmax distribution scaled by parameter $\beta\geq 0$. The above objective approaches mean when $\beta=0$ and pass@$k$ when $\beta\rightarrow\infty$. We find that the softmax objective tends to produce much more learning signal than pass@$k$, since its advantage estimates are generally more dense.

In general, we might also make use of leave-one-out softmax objectives as baselines for variance reduction as opposed to leave-one-out pass@$k$. This makes sure that the optimization objective is intact while introducing generally smoother signals.

\begin{algorithm}[t]
    \begin{algorithmic}[1]
        \STATE \textbf{INPUT} policy $\pi_\theta$
        \WHILE { $t=0,1,2...$ }
        \STATE (i) Sample prompt $x\sim \rho$
        \STATE (ii) Collect $k$ trajectories per prompt $(y_{i})_{i=1}^k$ from $\pi_\theta(\cdot|x)$ for all $i\in\{1,2...k\}$.
        \STATE (iii) Update policy parameter $\theta$ based on one of the gradient variants above (Eqn~\eqref{eq:loo} or Eqn~\eqref{eq:loo-demean}).
        \ENDWHILE
    \end{algorithmic}
        \caption{\small{Online policy optimization}}\label{algo:online}
\end{algorithm}

\end{appendix}

\end{document}